% This must be in the first 5 lines to tell arXiv to use pdfLaTeX, which is strongly recommended.
\pdfoutput=1
% In particular, the hyperref package requires pdfLaTeX in order to break URLs across lines.

\documentclass[11pt]{article}

% Change "review" to "final" to generate the final (sometimes called camera-ready) version.
% Change to "preprint" to generate a non-anonymous version with page numbers.
\usepackage[preprint]{acl}

% Standard package includes
\usepackage{times}
\usepackage{latexsym}

% For proper rendering and hyphenation of words containing Latin characters (including in bib files)
\usepackage[T1]{fontenc}
% For Vietnamese characters
% \usepackage[T5]{fontenc}
% See https://www.latex-project.org/help/documentation/encguide.pdf for other character sets

% This assumes your files are encoded as UTF8
\usepackage[utf8]{inputenc}

% This is not strictly necessary, and may be commented out,
% but it will improve the layout of the manuscript,
% and will typically save some space.
\usepackage{microtype}

% This is also not strictly necessary, and may be commented out.
% However, it will improve the aesthetics of text in
% the typewriter font.
\usepackage{inconsolata}

%Including images in your LaTeX document requires adding
%additional package(s)
\usepackage{graphicx}
\usepackage{amsmath}
\usepackage{amsfonts}
\usepackage{bbm}
\usepackage{subcaption}
\usepackage{booktabs}
\usepackage{cleveref}
\usepackage{makecell}
\usepackage{adjustbox}
\usepackage{algorithm}
\usepackage[algo2e]{algorithm2e} 
\usepackage{booktabs} 
\usepackage{amssymb} 
\usepackage{url}
\usepackage{tcolorbox}
\usepackage{amsthm}
\usepackage{thmtools} 
\usepackage{thm-restate}
\usepackage{multirow}
\usepackage{array, booktabs, makecell, adjustbox}
\usepackage{enumitem}

% If the title and author information does not fit in the area allocated, uncomment the following
%
%\setlength\titlebox{<dim>}
%
% and set <dim> to something 5cm or larger.

%%%%%%%%%%%%%%%%%%%%%%%%%%%%%%%%
% THEOREMS
%%%%%%%%%%%%%%%%%%%%%%%%%%%%%%%%
% \theoremstyle{plain}
\newtheorem{theorem}{Theorem}[section]

\newtheorem{assumption}[theorem]{Assumption}

\newcommand{\micro}{MiCRo }
\renewcommand{\Pr}{\mathbb{P}}
\newcommand{\E}{\mathbb{E}}
\newcommand{\Var}{\mathrm{Var}}

\title{MiCRo: Mixture Modeling and Context-aware Routing for Personalized Preference Learning}

\author{
\textbf{Jingyan Shen\textsuperscript{2}\thanks{Equal contribution. Emails: \texttt{\{jiarui14, ry21, yifan50, ruip4, tozhang, hanzhao\}@illinois.edu}, \texttt{jingyan.s@nyu.edu}, \texttt{fl38@rice.edu}. Code is available at \url{https://github.com/uiuctml/MiCRo}.}},\ 
\textbf{Jiarui Yao\textsuperscript{1}\footnotemark[1]},\ 
\textbf{Rui Yang\textsuperscript{1}\footnotemark[1]},\ 
\textbf{Yifan Sun\textsuperscript{1}},\ 
\textbf{Feng Luo\textsuperscript{3}},\ 
\textbf{Rui Pan\textsuperscript{1}}, \\
\textbf{Tong Zhang\textsuperscript{1}},\ 
\textbf{Han Zhao\textsuperscript{1}} \\
\textsuperscript{1}University of Illinois Urbana-Champaign,\ \\
\textsuperscript{2}New York University,\ 
\textsuperscript{3}Rice University. 
}

\begin{document}
\maketitle
\vspace{-5mm}
\begin{abstract}
Reward modeling is a key step in building safe foundation models when applying reinforcement learning from human feedback (RLHF) to align Large Language Models (LLMs). However, reward modeling based on the Bradley-Terry (BT) model assumes a global reward function, failing to capture the inherently diverse and heterogeneous human preferences. Hence, such oversimplification limits LLMs from supporting personalization and pluralistic alignment. Theoretically, we show that when human preferences follow a mixture distribution of diverse subgroups, a single BT model has an irreducible error. While existing solutions, such as multi-objective learning with fine-grained annotations, help address this issue, they are costly and constrained by predefined attributes, failing to fully capture the richness of human values. In this work, we introduce MiCRo, a two-stage framework that enhances personalized preference learning by leveraging large-scale binary preference datasets without requiring explicit fine-grained annotations. In the first stage, MiCRo introduces context-aware mixture modeling approach to capture diverse human preferences. In the second stage, MiCRo integrates an online routing strategy that dynamically adapts mixture weights based on specific context to resolve ambiguity, allowing for efficient and scalable preference adaptation with minimal additional supervision. Experiments on multiple preference datasets demonstrate that MiCRo effectively captures diverse human preferences and significantly improves downstream personalization.

\end{abstract}
\section{Introduction}

Reinforcement Learning from Human Feedback (RLHF) unlocks a promising pathway to improve the performance, reliability, and adaptability of AI system deployment \cite{bai2022training,dong2023raft,achiam2023gpt,dong2024rlhf}. Rather than relying on handcrafted reward models, the prevailing approach in RLHF employs \textit{preference learning} \cite{pbrl} to infer reward scores from human feedback, particularly for tasks involving subjective evaluation and open-ended responses without unanimous ground truths \cite{ziegler2019fine}. However, most existing methods rely on \emph{binary-labeled} pairwise datasets building upon the assumption that there exists a global reward function that can model human preferences. This fails to capture the diverse and often \textit{contradictory} nature of human preferences, ultimately limiting their effectiveness for personalized and pluralistic alignment \cite{maxmin,yang2024rewards,mukherjee2024multi,luo2025rethinking}.

Advancing preference learning to better accommodate heterogeneous human preferences remains an open challenge. Some recent studies seek to capture the diversity by collecting multifaceted annotations that distinguish between different evaluation attributes, e.g., helpfulness, harmlessness, coherence, instruction-following, etc \cite{armo,bai2022training,wang2024helpsteer2}. Although fine-grained labels provide deeper insights into individual preferences, collecting and curating them significantly increases data acquisition costs. Consequently, existing datasets limit their scope to a handful of pre-defined attributes and often rely on LLM-as-a-Judge \cite{zheng2023judging} for labeling response pairs. This raises concerns about their fidelity in representing the nuanced and ever-evolving landscape of human values.
\begin{figure*}[tb]
    \centering
    \includegraphics[width=\linewidth]{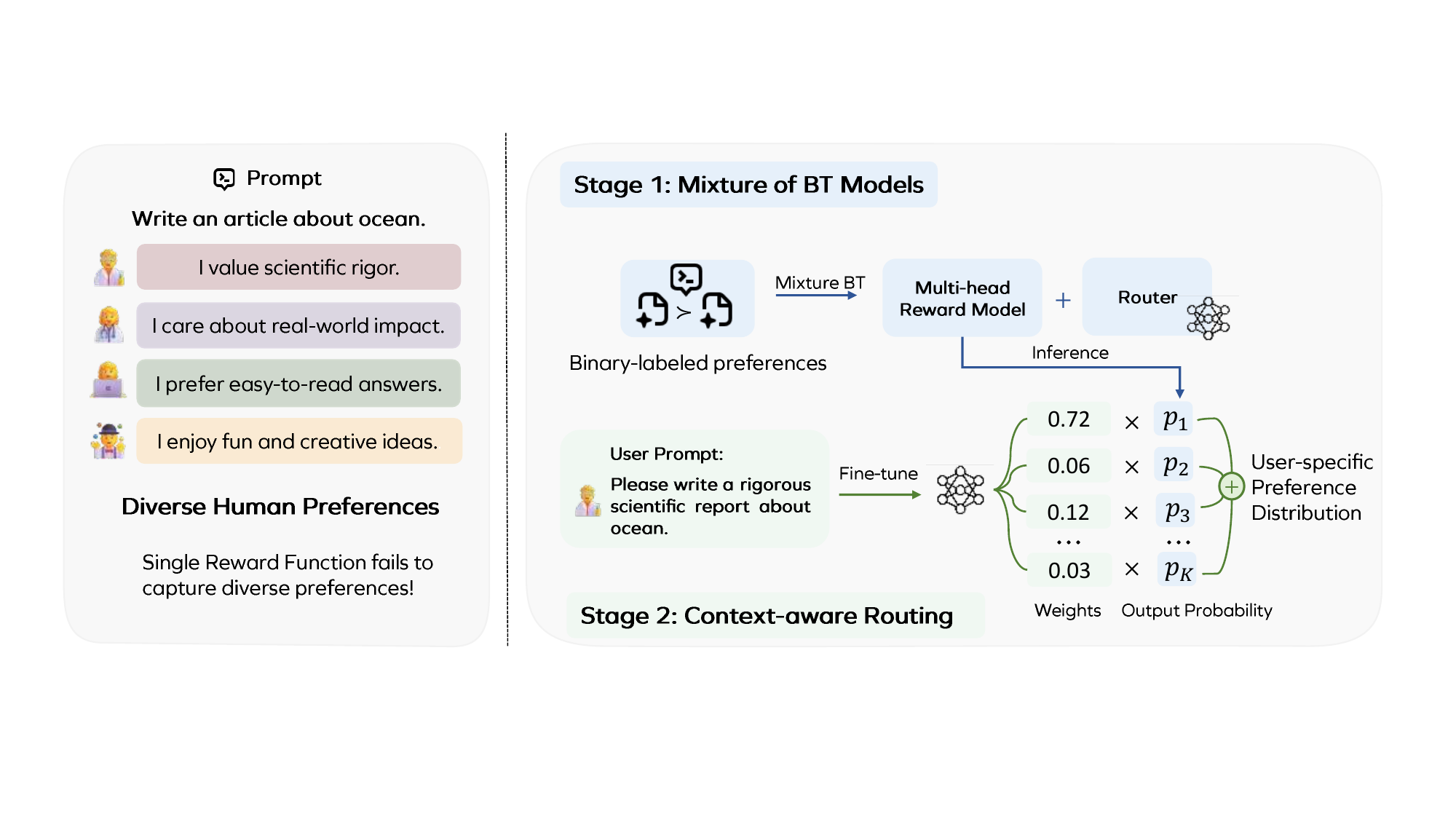}
    \caption{\textbf{Illustration of the two-stage pipeline of \micro for capturing personalized preferences.} A mixture of reward models (Stage 1) is trained on binary-labeled data, while the context-aware router (Stage 2) dynamically adjusts preference distributions based on user-provided context. The final preference distribution is obtained through a convex combination of different preference distributions.}
    \label{fig:main_figure}
    \vspace{-1.5em}
\end{figure*}

In addition, while users may share common interests, such as preferring a helpful and harmless assistant, their expectations are ultimately individualized and depend on use cases, i.e., \textit{contextual factors}, as illustrated in Fig.~\ref{fig:main_figure}.  To better capture such personalization, some approaches construct datasets with elaborate and pluralistic contexts into user prompts \cite{pitis2024improving,yang2024rewards} or system instructions \cite{lee2024aligning}. While reward models trained on such enriched datasets have shown improved generalization to personalized preferences, designing these criteria manually is still labor-intensive.

In this work, we introduce MiCRo, a two-stage, context-aware mixture modeling framework that leverages large-scale binary preference datasets to improve personalized preference learning. We first provide a theoretical result showing that when the underlying preference distribution follows a mixture of subpopulations, preference learning based on a single Bradley-Terry (BT) loss incurs an irreducible error. To address this, we propose a context-aware mixture modeling approach to decompose aggregate preferences into latent subpopulations, each with a distinct reward function. To further adapt to personalized preferences, we propose an online routing strategy with additional contextual information. In summary, our method offers two key advantages:
\begin{itemize}[leftmargin=*]
    \item \micro extracts multifaceted human preferences from widely available pairwise comparison datasets without requiring explicit fine-grained annotations or predefined attributes.
    \item \micro adapts the mixture heads to personalized preference learning with contextual information with only a limited number of samples. 
\end{itemize}

% \micro offers a practical, scalable, and generalizable solution for deploying preference models in real-world applications by minimizing reliance on labor-intensive annotations. 

Our extensive experiments across multiple preference datasets empirically demonstrate that \micro's mixture heads effectively capture diverse preferences and achieve superior performance compared to baselines on multidimensional benchmarks. With the addition of context-aware routing, the full \micro framework matches the performance of fully supervised and test-time adaptation methods, underscoring its effectiveness in enhancing downstream personalization.

% By leveraging mixture models, we provide a scalable and data-efficient solution to preference heterogeneity in RLHF, reducing annotation costs while improving alignment with human stakeholders.

\section{Related Work}

Reward modeling aims to learn a function that assigns scalar scores to input–output pairs based on human preferences, playing a central role in RLHF by steering LLM behavior toward human-aligned outputs and mitigating harmful responses \cite{he2024semi,sun2024rethinking}. The typical approach adopts the BT model~\cite{bradley1952rank,pbrl,learningtosumm} to learn from pairwise comparisons. To further address the diversity of human preferences, personalized preference learning seeks to align LLMs with user values in underspecified settings with ambiguous or heterogeneous intent~\cite{fleisig2023majority,baumler2023examples,chakraborty2024maxmin}. One major approach focuses on multi-objective alignment through ensembles of reward models. Techniques such as Mixture-of-Experts and model merging are used to decompose reward functions into task-specific or capability-based components~\cite{quan2024dmoerm,armo,rame2023rewarded,wang2024arithmetic}. However, training multiple reward models typically requires manually defined preference dimensions and dense supervision. To mitigate this, HyRe~\cite{lee2024test} trains an ensemble offline and adapts it to individual users at test time by dynamically reweighting the components using a small number of user-specific examples. Recent work, DRMs \cite{luo2025rethinking}, decomposes human preferences into a linear space using PCA, offering a promising training-free solution; however, its effectiveness depends on the choice of embedding model. A complementary line of work uses probabilistic approaches to model subgroup or latent preferences without explicit supervision \cite{siththaranjan2023distributional, poddar2024personalizing, chen2024pal, maxmin}, but their potential for personalization remains underexplored.

\begin{table*}[!ht]
    \caption{\textbf{Comparison of different methods and their key characteristics.} MiCRo optimizes a mixture of BT loss using binary labels and enables context-aware routing, setting it apart from prior methods in terms of context conditioning and weight learning.}
    \label{tab:comparison}
    \centering
    \resizebox{\textwidth}{!}{%
    \begin{tabular}{l|ccccc}
    \toprule
    \textbf{Method} & \textbf{Binary Labels} & \textbf{Context Conditioning} & \textbf{Reward Objective} & \textbf{Weight Learning} & \textbf{Special Characteristic}  \\
    \midrule
    ARMO \cite{armo}   & $\times$    & $\times$    & Mean Square Error      & End-to-end BT & Fixed Num of attributes  \\
    MaxMin \cite{maxmin} & \checkmark  & $\times$    & Mixture of BT      &  Hard clustering    &   Minority preference optimization       \\
    HyRe \cite{lee2024test}  & \checkmark  & $\times$    & BT    & Accuracy maximization  & Test-time adaptation         \\
    DRMs \cite{luo2025rethinking} & \checkmark & $\times$ & -- &  Accuracy maximization & Training-free reward decomposition via PCA  \\
    \midrule
    % \rowcolor{gray!10}
    \textbf{MiCRo (Ours)} & \checkmark  & \checkmark  & \textbf{Mixture of BT} & \textbf{Hedge Algorithm} & \textbf{Context-aware Routing} \\
    \bottomrule
    \end{tabular}%
    }
\end{table*}
\section{Limitation of a Single Reward Function}

\subsection{Problem Setup}
\paragraph{Notation and Preliminary}
Let $\mathcal{X}$ denote the space of prompts, and $\mathcal{Y}$ denote the space of responses. Denote $\Delta^K = \{\mathbf{x}\in \mathbb{R}^K \mid \sum_{i=1}^{K}x_i = 1, x_i \geq 0, i = 1, \ldots, K \}$ as the $(K-1)$-dimensional probability simplex. In standard preference learning, human preferences are modeled based on the classic BT model. Specifically, for a given prompt $x \in \mathcal{X}$ and an LLM $\pi$, two candidate responses $a_w,a_l \in \mathcal{Y}$ are sampled independently from  $ \pi(\cdot\mid x)$. The probability that a human annotator prefers $a_w$ over $a_l$ is given by:
\begin{align}
    \mathbb{P}\left(a_w \succ a_l\mid x\right)\nonumber=\sigma\left(r^*\left(x,a_w\right)-r^*\left(x,a_l\right)\right),
    % \label{eq:BT}
\end{align}
where $\sigma$ denotes the logistic function and $r^*:\mathcal{X}\times\mathcal{Y}\rightarrow\mathbb{R}$ is a latent reward function. For brevity, we assume $a_w \succ a_l$ (i.e., $a_w$ is always preferred over $a_l$). 
In practice, a static, finite-sample preference dataset is collected, and $r^{*}$ is estimated via maximum likelihood. 

\paragraph{Mixture Reward Distribution} However, in practice, reward data are collected from a population of annotators with inherently diverse preferences. Prior work has demonstrated that modeling all human preferences with a single parametric reward function leads to systematic underfitting and cannot capture such heterogeneity~\cite{maxmin,siththaranjan2023distributional}. To better reflect this diversity, we assume that each observed annotation is generated from one of the $K$ latent subpopulations, where $K$ is treated as a hyperparameter. A latent categorical variable $z \in \{1,\ldots, K\}$ is introduced as an indicator of the subpopulation from which a preference pair originates. We introduce the overall probability of a preference observation as a \textit{context-aware} mixture of $K$ Bradley-Terry models: 
\begin{align}
\begin{split}
        \mathbb{P}(a_w \succ a_l \mid x) &= \sum_{k=1}^K \mathbb{P}(z=k \mid x) \\
    & \quad \cdot \mathbb{P}(a_w \succ a_l \mid x, z=k),
\end{split}
\label{eq:k-prob}
\end{align}
where the weights of each mixture component depend on the prompt $x$ and the probability of preference within a specific subpopulation is given by
\begin{align}
    \mathbb{P}\left(a_w \succ a_l\mid x, z=k\right)\nonumber =\sigma\left(r_k^*\left(x,a_w\right)-r_k^*\left(x,a_l\right)\right),
    % \label{eq:k-BT}
\end{align}
and $r_k^*$ is assumed to be a latent reward function for subpopulation $k$. 

\subsection{Irreducible error of single BT model}
In this section, we provably show that, when the underlying preference distribution is a mixture of BT models, no matter how rich the model class is for reward functions, preference learning based on a single BT model has an irreducible error. Before we present our result, we first assume the diversity of the underlying population:
\begin{assumption}[Diversity]
\label{ass:diversity}
There exists a constant $\rho > 0$, such that for every prompt $x\in\mathcal{X}$ and every subpopulation group $k\in[K]$, $\Pr(z = k \mid x) \geq \rho$.
\end{assumption}
For every tuple $(x, a_w, a_l)$, define the score function $s_k^*$ for group $k$ as $s_k^*(x, a_w, a_l) := r_k^*(x, a_w) - r_k^*(x, a_l)$. Let $\mathcal{L}_\mathrm{CE}(r)$ be the cross-entropy loss of a BT preference model $\Pr(a_w \succ a_l \mid x) = \sigma(r(x, a_w) - r(x, a_l))$ according to the reward function $r$, \begin{restatable}[Error lower bound]{theorem}{lowerbound}
\label{thm:lowerbound}
For an arbitrary reward function $r$, if the predicted preference is based on a single BT model, then
    $\mathcal{L}_\mathrm{CE}(r) \geq 2\rho K \E_x\Var_z[\{\E s_k^*\sharp \pi(a_w, a_l\mid x)\}_{k=1}^K] + H(x, \pi, \Pr(z|x))$.
\end{restatable}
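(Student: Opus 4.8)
The plan is to split the cross-entropy objective into an irreducible information-theoretic floor and an approximation gap, and then to show the approximation gap is bounded below by how much a single antisymmetric score can disagree with the $K$ subpopulation scores. First I would write the loss in population form: for each tuple $(x,a_w,a_l)$ the true label is Bernoulli with parameter $p^*(x,a_w,a_l)=\sum_k \Pr(z=k\mid x)\,\sigma(s_k^*(x,a_w,a_l))$, the mixture of Eq.~\eqref{eq:k-prob}, while the single BT model predicts $q=\sigma(s)$ with $s(x,a_w,a_l)=r(x,a_w)-r(x,a_l)$. The standard decomposition of the Bernoulli cross-entropy then gives
\[
\mathcal{L}_{\mathrm{CE}}(r)=\underbrace{\E_{x,a_w,a_l}\big[H_b(p^*)\big]}_{=\,H(x,\pi,\Pr(z\mid x))}+\;\E_{x,a_w,a_l}\big[\mathrm{KL}\big(p^*\,\|\,q\big)\big],
\]
where $H_b$ is the binary entropy. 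The first term does not depend on $r$ and is exactly the claimed entropy floor $H(x,\pi,\Pr(z\mid x))$, so the whole task reduces to lower bounding the expected $\mathrm{KL}$ term by $2\rho K\,\E_x\Var_z[\{\E s_k^*\sharp\pi\}_k]$.

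For the $\mathrm{KL}$ term I would apply Pinsker's inequality pointwise, $\mathrm{KL}(p^*\|q)\ge 2(p^*-q)^2$, reducing the problem to lower bounding $\E_{x,a_w,a_l}[(p^*-q)^2]$, i.e. the squared $L^2$ distance between the mixture preference surface and the best achievable single-BT surface. The crux is that $q=\sigma(s)$ is generated by a potential difference $s$, an antisymmetric and cycle-consistent function, whereas $\mathrm{logit}(p^*)$ generally is not, so no choice of $r$ can align the single score with all $K$ group scores simultaneously. To extract the stated variance I would pass to score space: writing $\bar s_k(x)=\E s_k^*\sharp\pi$ for the expected score gap of group $k$ under the (oriented) pair law, a second-order expansion of $\sigma$ with a uniform curvature lower bound $\sigma'\ge c_0>0$ on the bounded-reward range lower bounds $\E_{a_w,a_l}[(p^*-q)^2]$ by a constant multiple of $\sum_k \Pr(z=k\mid x)\,(\bar s_k(x)-\mu(x))^2$, where $\mu(x)=\E s\sharp\pi$ is the single model's mean gap. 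Minimizing this best-constant fit over the free quantity $\mu$ leaves precisely the weighted across-group variance $\sum_k \Pr(z=k\mid x)(\bar s_k-\bar s)^2$.

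Finally I would invoke Assumption~\ref{ass:diversity}: the diversity bound $\Pr(z=k\mid x)\ge\rho$ turns the weighted residual into $\rho$ times the uniform sum of squared deviations, whose value at the optimal center equals $\rho K\,\Var_z[\{\bar s_k\}_k]$, i.e. $\sum_k \Pr(z=k\mid x)(\bar s_k-\mu)^2\ge\rho\min_\mu\sum_k(\bar s_k-\mu)^2=\rho K\,\Var_z[\{\bar s_k\}_k]$. Combining the two factors of $2$ from the cross-entropy decomposition and Pinsker, the curvature constant, and this diversity step, and then taking $\E_x$, yields the advertised prefactor $2\rho K$ on top of the entropy floor.

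The step I expect to be the main obstacle is the reduction in score space: Pinsker controls the probability gap, but converting $(p^*-q)^2$ into a quadratic in the mean score gaps $\bar s_k$ must survive the nonlinearity and contraction of $\sigma$. Because $\sigma'$ degenerates in the tails, this step needs either a boundedness assumption on the rewards (so $\sigma'$ stays bounded away from $0$) or a more careful argument that retains the per-group structure rather than collapsing it through a single application of Jensen; I would also need to verify that the oriented pair law underlying $\E s_k^*\sharp\pi$ makes these means the operative quantities, since a naively symmetric pair distribution would force every $\bar s_k=0$ and trivialize the bound. Once the score-space inequality is secured, the remaining constant bookkeeping is routine.
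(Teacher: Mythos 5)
There is a genuine gap, and it comes from the very first step: you marginalize over $z$ before splitting off the entropy. Your decomposition writes $\mathcal{L}_{\mathrm{CE}}(r)=\E[H_b(p^*)]+\E[D_\mathrm{KL}(\mathrm{Ber}(p^*)\|\mathrm{Ber}(q))]$ with $p^*=\sum_k\Pr(z=k\mid x)\,\sigma(s_k^*)$. But the paper's $H(x,\pi,\Pr(z\mid x))$ is the entropy \emph{conditional on $z$}, i.e.\ $\E_x\sum_k\Pr(z=k\mid x)\,\E_{(a_w,a_l)}[H_b(\sigma(s_k^*))]$, which by concavity of $H_b$ is strictly smaller than your floor $\E[H_b(p^*)]$ in general. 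Consequently your residual $\E[D_\mathrm{KL}(p^*\|q)]$ is too small to carry the variance term. Concretely, take $K=2$, $\Pr(z=1\mid x)=\Pr(z=2\mid x)=1/2$, and $s_1^*\equiv c$, $s_2^*\equiv -c$: then $p^*\equiv 1/2$ is matched exactly by $r\equiv 0$, so your KL residual is $0$, yet $\Var_z[\{\E\,\sigma(s_k^*)\}]=(\sigma(c)-1/2)^2>0$. So no strengthening of your second stage can recover the additive term $2\rho K\,\E_x\Var_z[\cdot]$ on top of your entropy floor — the inequality you are aiming for is false under your reading of $H$. The paper instead keeps the sum over $k$ outside and compares $\sigma(s)$ to each group's $\sigma(s_k^*)$ separately (one KL per group, weighted by $\Pr(z=k\mid x)$), then applies Jensen over $(a_w,a_l)\sim\pi$ inside each group's KL, then Pinsker; the "disagreement with a mixture" you measure is replaced by "average disagreement with each component," which is what actually dominates the variance.

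The second problem is your passage to score space. The quantities $\E\, s_k^*\sharp\pi(a_w,a_l\mid x)$ appearing in the theorem are, as the paper's proof makes clear (via "TV distance of two Bernoullis"), the expected preference \emph{probabilities} $\E_{(a_w,a_l)\sim\pi}[\sigma(s_k^*)]\in[0,1]$, not expected raw score gaps. You correctly flag that converting $(p^*-q)^2$ into a quadratic in raw scores needs a curvature lower bound on $\sigma'$ and hence a boundedness assumption on rewards; the theorem carries no such assumption, and with raw scores the bound would be false (let $c\to\infty$ in the example above: the score variance diverges while the loss stays at $\log 2$). The paper needs no expansion of $\sigma$ at all: after Jensen and Pinsker, the total variation distance between $\mathrm{Ber}(\E\sigma(s_k^*))$ and $\mathrm{Ber}(\E\sigma(s))$ is exactly $|\E\sigma(s_k^*)-\E\sigma(s)|$, and the final two steps — $\Pr(z=k\mid x)\ge\rho$ and $\min_b\frac{1}{K}\sum_k(x_k-b)^2=\Var[\{x_k\}]$ — are the same ones you use. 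Those closing steps of yours are fine; the decomposition and the score-space reduction are the parts that must be replaced.
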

We defer the detailed proof of~\Cref{thm:lowerbound} to Appendix~\ref{ap:proof_lowerbound}. In the lower bound, $s_k^*\sharp \pi(a_w, a_l\mid x)$ is the induced distribution of $s^*_k$ acting on the pair of responses $(a_w, a_l)\sim \pi$ from the LLM given the prompt $x$, and the variance operator $\Var_z$ is applied with respect to the $K$ groups. $H(x, \pi, \Pr(z|x))$ is the Shannon entropy of the joint distribution over preference data given by the prompt, subpopulations, as well as the LLM $\pi$ (c.f.~\Cref{ap:proof_lowerbound} for its definition). At a colloquial level, the lower bound says that, the more diverse the ground-truth scores $s_k^*$ from each subpopulation (hence a larger variance), or the subpopulation distribution $\Pr(z\mid x)$ (hence a larger $\rho$ and entropy), then the larger the cross-entropy loss of using a single BT model.
\section{Method}
The inherent limitation of a single BT model motivates the need for richer preference modeling. However, two key challenges remain: \textbf{(C1)} \emph{How to extract a mixture of reward functions from binary-labeled datasets without incurring additional annotation costs?} \textbf{(C2)} \emph{Given limited access to user-specific intent, how can we efficiently adapt to personalized preferences at deployment time?}

To this end, we propose a two-stage algorithm that first uncovers latent heterogeneity in human preferences through mixture modeling, and then adapts to individual users via a lightweight, context-aware online routing strategy.

\subsection{Mixture Modeling for Diverse Preferences} 
\label{sec:method_mixture_models}
We begin by fundamentally comparing our mixture modeling objective with previous methods and then introduce the detailed design of our approach.

\paragraph{Comparison with Prior Mixture Modeling Approaches} Unlike static and unconditional mixture approach used in previous work \cite{maxmin}, our formulation from Equation ~\eqref{eq:k-prob} introduces a dynamic, context-aware weighting mechanism for mixture models by conditioning the subpopulation weights $\mathbb{P}(z=k\mid x)$ on the given prompt $x$. We emphasize that \emph{this is a crucial design that allows for contextual specialization, where prompts automatically activate the most relevant subpopulation’s reward model.} By mimicking real-world expertise allocation, our approach avoids the diluted performance of static averaging. We provide a more detailed comparison of our method with existing works in~\Cref{tab:comparison}.

\paragraph{Mixture Modeling Designs}
In practice, we parameterize the reward function for each subpopulation as $r_{\phi_k}:\mathcal{X}\times\mathcal{Y}\rightarrow\mathbb{R}$ for $ k=1,\ldots, K$ and model the mixture weights with a network $f_{\psi}:\mathcal{X}\rightarrow\Delta^K$.  Given a training dataset $\mathcal{D}=\{(x,a_w,a_l)_i\}_{i=1}^n$, we minimize the negative log-likelihood defined as:
\begin{align}
    \mathcal{L}_{\mathrm{mle}} = & -\frac{1}{n}\sum_{\mathsmaller{(x,a_w,a_l) \in \mathcal{D}}} \log \sum_{k=1}^{K} \Big( f_{\psi,k}(x) \nonumber \\
    & \quad \cdot \sigma \left( r_{\phi_k}(x,a_w) - r_{\phi_k}(x,a_l) \right) \Big).
\end{align}
To prevent any single model from dominating, we add a regularization term by imposing a uniform prior to the weight distribution:
\begin{align}
\mathcal{L}_{\mathrm{reg}}=\frac{1}{n}\sum_{\mathsmaller{(x,a_w,a_l) \in \mathcal{D}}}\sum_{k=1}^Kf_{\psi,k}(x)\log f_{\psi,k}(x).
\end{align}
The final loss function becomes
\begin{align}
\begin{split}
    \mathcal{L}(\boldsymbol{\phi},\boldsymbol{\psi}) = 
 \mathcal{L}_{\mathrm{mle}} + \alpha \mathcal{L}_{\mathrm{reg}},
\end{split}
\end{align}
where the coefficient $\alpha$ is set to $0.5$ in our implementation. Overall, this mixture training phase on large-scale datasets learns a diverse set of reward functions, establishing a robust foundation for adaptation to nuanced preferences.

\subsection{Context-aware Routing for Personalized Preference Learning}
\label{sec:method_context_routing}

While the pre-trained mixture model has the potential to capture latent reward functions, assigning meaningful weights to these reward heads upon a given prompt is difficult without a clear signal of user intent. Technically, during the training of the mixture model, since we do not have labeled data to train the router separately, we will need one strategy to learn the correspondence between the mixture reward heads and the underlying user intent for better routing assignments.

Recent work on contextual alignment \cite{pitis2024improving,lee2024aligning,poddar2024personalizing} highlights that incorporating context can reduce ambiguity and improve estimation accuracy. Motivated by this, we introduce a second stage that incorporates more concrete contextual information—such as user instructions or metadata (e.g., demographics or interaction history)—to guide the routing strategy by learning the correspondence between user intent and mixture reward heads.

Unlike prior methods that require training reward models on large-scale contextual datasets \cite{pitis2024improving,lee2024aligning}, our approach avoids costly data collection and full model retraining. Instead, we leverage the unsupervised mixture heads pre-trained in the first stage to enable sample-efficient online adaptation. This allows us to refine the mixture weights and generate personalized predictions using only a small number of samples.

To this end, we propose to fine-tune the routing network $f_\psi$ using the Hedge algorithm \cite{arora2012multiplicative},  where each input is a pair $(x_i,c_i) \sim \mathcal{D}_c$, with $c_i$ denoting additional context information.  Intuitively, Hedge maintains a set of experts (i.e., reward heads) and adaptively reweights them based on their performance—assigning higher weights to those that better align with observed preferences. In our framework, the user preferences can be modeled as a convex combination of the $K$ latent subpopulation preferences. For an example $(x_i,c_i,a_{i,w},a_{i,l})$, denote the output probability from $k$-th head as $p_k(a_{i,w}\succ a_{i,l}|x_i):=\sigma(r_{\phi_k}(x_i,a_{i,w})-r_{\phi_k}(x,a_{i,l}))$ and define 
\begin{align*}
    \mathcal{L}_{i,k} :=  -\log p_k(a_{i,w}\succ a_{i,l}|x_i,c_i).
\end{align*}

We consider an online learning setting where contextual data is collected within a budget of $B$. We acquire a batch of preference pairs $\mathcal{D}_{\mathcal{A}}=\{(x,c,a_w,a_l)_i\}_{i=1}^B$ with additional contexts. Motivated by the multi-task learning literature \cite{he2024robust,liu2024online}, we propose a training objective for the router based on the framework of online mirror descent with KL divergence regularization \cite{hazan2016introduction}:
\begin{equation*}
\begin{aligned}
       & \min_{\psi} && \frac{1}{B}\sum_{i=1}^B  \mathcal{L}_i, \quad\text{s.t.} && f_\psi(x_i,c_i) \in \Delta^K,
\end{aligned}
    \label{eq:routing_loss}
\end{equation*}
where $\mathcal{L}_i :=\sum_{k=1}^K f_\psi(x_i,c_i)_k \mathcal{L}_{i,k}+\tau\mathrm{KL}(f_\psi(x_i,c_i)\|\boldsymbol{\omega}_i)$
and $\boldsymbol{\omega}_i$ is a weight vector that comes from a previous iteration or pre-trained weights, and $\tau \geq 0$ is a temperature hyperparameter. Note that the first term is an upper bound of the negative log-likelihood function of mixture distribution based on Jensen’s inequality.

\paragraph{Routing with Hedge Algorithm} With $\tau > 0$, the optimal solution for each batch will be:
\begin{align}
    f_{i,k} = \frac{\omega_{i,k}\exp\left(-\mathcal{L}_{ik}/\tau\right)}{\sum_{j=1}^{K} \omega_{i,j}\exp\left(-\mathcal{L}_{ij}/\tau\right)},
    \label{eq:hedge}
\end{align}
which yields Algorithm \ref{alg:mwu} to learn the router iteratively with contextual information. For each iteration, we determine optimal weights target as soft labels using Equation~\eqref{eq:hedge}. Then, we fine-tune the router by minimizing the cross-entropy loss between the soft labels and the router network's predictions. 

\begin{algorithm}[h]
\caption{Context-aware Router Learning}
\label{alg:mwu}
\KwIn{Mini-batch $\{(x_i,c_i,a_w^i,a_l^i,y_i)\}_{i=1}^{B_t}$,  temperature $\tau$, pre-trained router $f_\psi$, iterations $T$, reward heads from the first stage $r_{\phi_k},\forall k=1,\ldots,K$.}
Initialize $\psi^{(1)} = \psi$, $ \omega_{i,k}=f_{\psi,k}(x_i)$\\
\For{t $=1$ to $T$}{
\tcp{weight update}
  \For{$i=1$ to $B_t$}{
    \For{$k=1$ to $K$}{
    $\mathcal{L}_{ik} \leftarrow -\log p_k(a_w^i \succ a_l^i\mid x_i)$ \\
      $\omega_{i,k} \leftarrow f_{\psi^{(t)}}(x_i,c_i)_{k} \cdot \exp \bigl(\tfrac{-\mathcal{L}_{ik}}{\tau}\bigr)$\;
    }
    $Z \leftarrow \sum_{j=1}^K \omega_{i,j}$\ \\
    $\omega_{i,k} \leftarrow \tfrac{\omega_{i,k}}{Z}$ for $k=1,\ldots, K$\;
  }
  \tcp{router update}
  $\displaystyle 
  \mathcal{L}^{(t)}_{\text{weight}}(\psi) \leftarrow \frac{1}{B_t}\sum_{i=1}^{B_t}
  \mathcal{L}_{\text{CE}}\bigl(\boldsymbol{\omega}_i \,,\, f_{\psi^{(t)}}(x_i,c_i)\bigr)$\ \\
  Backprop $\mathcal{L}^{(t)}_{\text{weight}}(\psi)$ to transition from $\psi^{(t)}$ to $ \psi^{(t+1)}$ 
}
\end{algorithm}

Our routing learning approach offers two clear advantages in deployment: 1) \textbf{\textit{efficiency}}: By leveraging the expert heads trained on large-scale datasets in the first stage, the second stage does not require retraining the reward model or relying on extensive labeled data; instead, a lightweight, online router continuously adapts during deployment. 2) \textbf{\textit{generalizability}}: Our learning-based router harnesses contextual information to adjust weights with a learning-based algorithm. Unlike test-time adaptation methods \cite{lee2024test} that rely on a set of test data for re-weighting, our router is \textit{trained online}, allowing generalizing to new contexts without access to specific test data. 
\begin{figure*}[ht]
    \centering
    \includegraphics[width=0.75\linewidth]{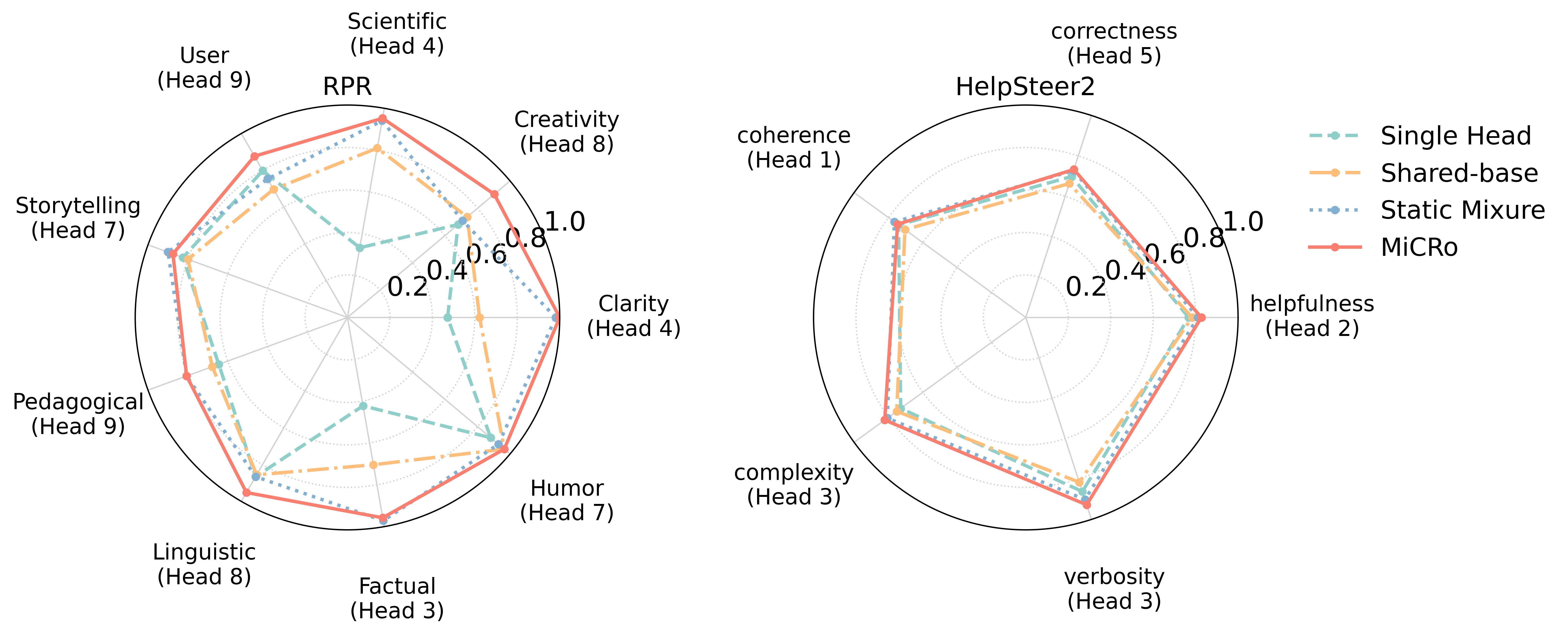}
    \vspace{-5pt}
    \caption{\textbf{Comparison of accuracy scores between the best heads of \micro and other baselines on multiple test dimensions.} The mixture heads can disentangle diverse human preferences, with different heads excelling on different attributes. They consistently outperform the single reward model across all attributes. Overlaps where the same head dominates multiple attributes may reflect inherent attribute correlations. }
    \label{fig:radar}
\end{figure*}

\section{Experiments}

In our experiments, we aim to answer two research questions: \textbf{(Q1)} Can our context-aware mixture modeling framework extract diverse reward functions from binary-labeled preference data? \textbf{(Q2)} Can the fine-tuned routing network enable effective personalization by adapting to contextual signals?

\subsection{Experimental Setup}

\paragraph{Training datasets}
We train the mixture reward models on binary-labeled preference datasets: HelpSteer2~\cite{wang2024helpsteer2}, RPR~\cite{pitis2024improving}, and preference-700K~\cite{dong2024rlhf}. HelpSteer2 and RPR datasets contain human-labeled response pairs evaluated across multiple assessment dimensions. We construct the binary-labeled sets with the following process. For each dimension, we extract binary preference pairs based on absolute ratings, treating responses with higher ratings as ``chosen'' and those with lower ratings as ``rejected.'' Pairs with identical ratings are excluded from both training and test sets. To ensure diversity in preferences, we exclude pairs where all attributes are unanimously agreed upon from the training set. Ultimately, all pairs from all dimensions are mixed, resulting in 23.5K samples and 5.8K training samples from HelpSteer2 and RPR, respectively. The  preference-700K dataset is a large-scale pairwise dataset created by aggregating instructions and response pairs from multiple data sources. Further details on these datasets are provided in Appendix \ref{appendix:datasets}.

\paragraph{Models}
We use the best 3B open-source reward model \href{https://huggingface.co/Ray2333/GRM-Llama3.2-3B-rewardmodel-ft}{GRM-Llama3.2-3B} \cite{yang2024regularizing} as the backbone, keeping it frozen while training $K$ linear probing heads on top. The router network is implemented as a one-layer MLP containing 128 units and a softmax activation. Full implementation details are provided in Appendix~\ref{sec:implementation_details}. We further evaluate with an open-source 8B reward model in Appendix~\ref{sec:evaluation_8B_model}.

\paragraph{Baselines} We evaluate the following baselines: (1) \textit{Single Reward}: A single-head model trained using the standard BT loss. (2) \textit{Static Mixture}: A simplified variant of our method, corresponding to the approach used in MaxMin-RLHF~\cite{maxmin}, where the mixture model is trained with fixed, input-independent weights, without leveraging contextual information. (3) \textit{Shared-Base Ensemble Model}: \citet{lee2024test} introduces HyRe, a test-time adaptation approach based on ensemble networks. We adopt the multi-head architecture with a frozen prior network and multiple trainable heads, optimizing a uniformly weighted sum of BT losses. (4) \textit{Fully Supervised Model}: We include ARMO \cite{armo}, an 8B model trained on more than 500K fine-grained labels, as a baseline with full supervision.

\begin{table*}[t]
\centering
\resizebox{\linewidth}{!}{
\begin{tabular}{p{4cm}l|ccccc|c}
\toprule
Method & Supervision & Helpfulness & Correctness & Coherence & Complexity & Verbosity & Average \\
\midrule
Single Reward  & Binary & 0.7838 & 0.6686 & 0.6914 & 0.7907 & 0.8816 & 0.7632 \\
Shared-Base Best Head  & Binary & 0.7838&0.6628&0.7037&0.7519&0.8158&0.7436\\
Static Mixture & Binary & 0.7243 & 0.6570 & 0.6790 & 0.8372 & 0.9013 & 0.7598\\
ARMO (8B)  & Fine-grained & 0.6919 & 0.6395 & 0.7593 & 0.7132 & 0.7500 & 0.7108\\
HyRe & Binary + Test Labels &0.7692 & 0.6987 & 0.6781 & 0.7168 & 0.8015 & 0.7329 \\
\midrule
MiCRo-HyRe  & Binary + Test Labels & 0.8270 & 0.7035 & 0.7407 & 0.8217 & 0.8487 & 0.7883 \\
\textbf{MiCRo (Ours)} & Binary + Context & 
0.8324 \raisebox{-0.3ex}{\textcolor{black}{\scriptsize\colorbox{green!10}{+0.05}}} & 
0.7140 \raisebox{-0.3ex}{\textcolor{black}{\scriptsize\colorbox{green!10}{+0.04}}} & 
0.7543 \raisebox{-0.3ex}{\textcolor{black}{\scriptsize\colorbox{green!10}{+0.06}}} & 
0.7628 & 
0.8513  & 
0.7830 \raisebox{-0.3ex}{\textcolor{black}{\scriptsize\colorbox{green!10}{+0.02}}} \\
\bottomrule
\end{tabular}}
\caption{\textbf{Accuracy scores on HelpSteer2 test set.} On average, MiCRo outperforms baselines across various attributes and overall results. Scores in green indicate absolute improvement over the Single Reward baseline. All baselines except ARMO (8B) use the same 3B base model.}
\label{tab:helpsteer2_results}
\end{table*}

\begin{table*}[t]
\centering
\begin{adjustbox}{max width=\textwidth}
\begin{tabular}{ll|ccccccccc|c}
\toprule
Method & Supervision &
\makecell{Clarity} & 
\makecell{Creativity} & 
\makecell{Scientific \\ Rigor} & 
\makecell{User- \\ Friendliness} & 
\makecell{Storytelling} & 
\makecell{Pedagogical} & 
\makecell{Linguistic \\ Creativity} & 
\makecell{Factual \\ Accuracy} & 
\makecell{Humor} &
\makecell{Average} \\
\midrule
Single Reward & Binary & 0.4717 & 0.6806 & 0.3333 & 0.7978 & 0.8375 & 0.6452 & 0.8654 & 0.4225 & 0.8810 & 0.6594 \\
Shared-Base Best Head & Binary & 0.6226 & 0.7361 & 0.8095 & 0.6966 & 0.8000 & 0.6774 & 0.8558 & 0.7042 & 0.9643 & 0.7629 \\
Static Mixture & Binary &0.9057 & 0.6389 & 0.9048 & 0.6854 & 0.6250 & 0.7903 & 0.7404 & 0.8451 & 0.9167 & 0.7836 \\
ARMO (8B) & Fine-grained & 0.9057 & 0.6806 & 0.9405 & 0.6966 & 0.7875 & 0.7903 & 0.9135 & 0.9014 & 0.9463 & 0.8403 \\
HyRe & Binary + Test Labels& 0.7027 & 0.5893 & 0.6618 & 0.8493 & 0.6563 & 0.7826 & 0.7045 & 0.7091 & 0.4853 & 0.6823 \\
\midrule
MiCRo-HyRe & Binary + Test Labels & 0.9556 & 0.8125 & 0.9605 & 0.9012 & 0.8333 & 0.7963 & 0.9063 & 0.9524 & 0.9605 & 0.8974 \\
\textbf{MiCRo (Ours)} & Binary + Context & 
0.9170 \textcolor{black}{\scriptsize\raisebox{-0.4ex}{\colorbox{green!10}{+0.45}}} &
0.6289 &
0.8119 \textcolor{black}{\scriptsize\raisebox{-0.4ex}{\colorbox{green!10}{+0.48}}} &
0.8696 \textcolor{black}{\scriptsize\raisebox{-0.4ex}{\colorbox{green!10}{+0.07}}} &
0.7525 &
0.7935\textcolor{black}{\scriptsize\raisebox{-0.4ex}{\colorbox{green!10}{+0.15}}} &
0.8558 &
0.8563 \textcolor{black}{\scriptsize\raisebox{-0.4ex}{\colorbox{green!10}{+0.43}}} &
0.9109 \textcolor{black}{\scriptsize\raisebox{-0.4ex}{\colorbox{green!10}{+0.03}}} &
0.8218 \textcolor{black}{\scriptsize\raisebox{-0.4ex}{\colorbox{green!10}{+0.16}}} \\
\bottomrule
\end{tabular}
\end{adjustbox}
\caption{\textbf{Accuracy scores on the RPR test set.} Scores in green indicate absolute improvement over the Single Reward baseline. All baselines except ARMO (8B) use the same 3B base model.}
\label{tab:rpr_results}
\end{table*}

 % We report main results for $K=5$, $K=10$, and $K=10$ on the HelpSteer2, \rpr, and \sevenk datasets, respectively. 

\subsection{Stage-1 Evaluation: Can \micro Disentangle Diverse Human Preferences?}
\label{sec:stage_1_eval}

\paragraph{Evaluation Setting} In this experiment, we train \micro and baseline models (except ARMO) on HelpSteer2 and RPR training sets. We then evaluate the learned heads on the HelpSteer2 and RPR test sets, which cover 14 distinct preference dimensions. Each head is evaluated individually on every dimension. To ensure fair comparisons, we use the same number of heads $K$ for \micro and other multi-head baselines. The full evaluation results of MiCRo mixture heads are provided in Appendix~\ref{appendix:full_eval_mixture_results}.

\begin{table*}[t]
\centering
\begin{adjustbox}{max width=\textwidth}
\begin{tabular}{ll|ccccccccc|c}
\toprule
\makecell{Training \\ Dataset} & Method &
\makecell{Clarity} & 
\makecell{Creativity} & 
\makecell{Scientific \\ Rigor} & 
\makecell{User- \\ Friendliness} & 
\makecell{Storytelling} & 
\makecell{Pedagogical} & 
\makecell{Linguistic \\ Creativity} & 
\makecell{Factual \\ Accuracy} & 
\makecell{Humor} &
\makecell{Average} \\
\midrule
\multirow{4}{*}{preference} 
& Single Head & 0.8679 & 0.6806 & \textbf{0.9286} & 0.6067 & 0.6500 & \textbf{0.8065} & 0.8077 & 0.9155 & \textbf{0.9405} & 0.8004 \\
  & Shared-base Best Head & 0.8302 & \textbf{0.8056} & 0.8095 & \textbf{0.8089} & 0.6500 & 0.7903 & \textbf{0.8654} & 0.8169 & 0.9167 & 0.8009 \\
% & HyRe &  0.7027 & \textbf{0.7500} & 0.7647 & 0.6849 & 0.4531 & 0.6957 & \textbf{0.8523} & 0.7273 & 0.9118 & 0.7273 \\
\multirow{2}{*}{-700K} & Static Mixture & 0.8679 & 0.6250 & 0.9048 & 0.6292 & 0.6500 & \textbf{0.8065} & 0.7981 & 0.9014 & 0.9286 & 0.7902 \\
% & MiCRo Best Head & \textbf{0.9434} & \textbf{0.6944} & \textbf{0.9405} & \textbf{0.7079} & \textbf{0.7500} & \textbf{0.8387} & \textbf{0.8269} & \textbf{0.9437} & \textbf{0.9643} & \textbf{0.8455} \\
& \textbf{MiCRo (Ours)} & \textbf{0.9358} & 0.6833 & 0.9190 &  0.6764 & \textbf{0.6700} & 0.7484 & 0.7827 & \textbf{0.9380} & \textbf{0.9405} & \textbf{0.8105} \\
\midrule
\multirow{4}{*}{HelpSteer2} 
& Single Head & 0.8491 & \textbf{0.6667} & \textbf{0.9048} & 0.6180 & 0.6500 & 0.8065 & 0.7404 & \textbf{0.8732} & 0.8690 & 0.7753 \\
& Shared-base Best Head & 0.8491 & \textbf{0.6667} & \textbf{0.9048} & 0.6629 & \textbf{0.7000} & 0.8065 & 0.7404 & 0.8310 & 0.9048 & 0.7851 \\
& Static Mixture & 0.9057 & 0.6389 & \textbf{0.9048} & 0.6854 & 0.6250 & 0.7903 & 0.7404 & 0.8451 & 0.9167 & 0.7838 \\
& \textbf{MiCRo (Ours)} & \textbf{0.9245} & \textbf{0.6667} & 0.8690 & \textbf{0.7079} & 0.6875 & \textbf{0.8226} & \textbf{0.7692} & 0.8592 & \textbf{0.9286} & \textbf{0.8133} \\
\bottomrule
\end{tabular}
\end{adjustbox}
\caption{\textbf{Performance on the RPR test set with models trained on HelpSteer2 and preference-700K dataset.} \micro outperforms other baselines trained with binary labels on average scores.}
\label{tab:rpr_combined_comparison}
\end{table*}

\paragraph{Results} Fig.~\ref{fig:radar} compares the best-performing head for each test dimension. The results demonstrate that different heads from \micro specialize in distinct evaluation dimensions and consistently surpass the performance of all baeslines across all dimensions. On average, MiCRo consistently achieves the highest average scores across both RPR (0.921) and HelpSteer2 (0.811) benchmarks, with substantial gains over the single-head baseline (+40.0\% on RPR, +6.8\% on HelpSteer2), the \textit{Sharebase} ensemble baseline (+20.7\% and +9.1\% respectively), and the mixture model without context routing (+5.5\% and +1.1\% respectively), demonstrating the robust benefits of context-aware mixture modeling approach. These results suggest that mixture modeling more effectively captures latent diverse preferences compared to single reward or ensemble models, and that context-aware weighting further improves over static mixtures. Fig.~\ref{fig:rpr_weights} presents a qualitative example of mixture weights from the Stage 1 router, showing how different prompts from the RPR test set activate different heads. This highlights the effectiveness of our contextual router compared to the unconditional router used in prior work.

\begin{figure}
    \centering
    \includegraphics[width=0.8\linewidth]{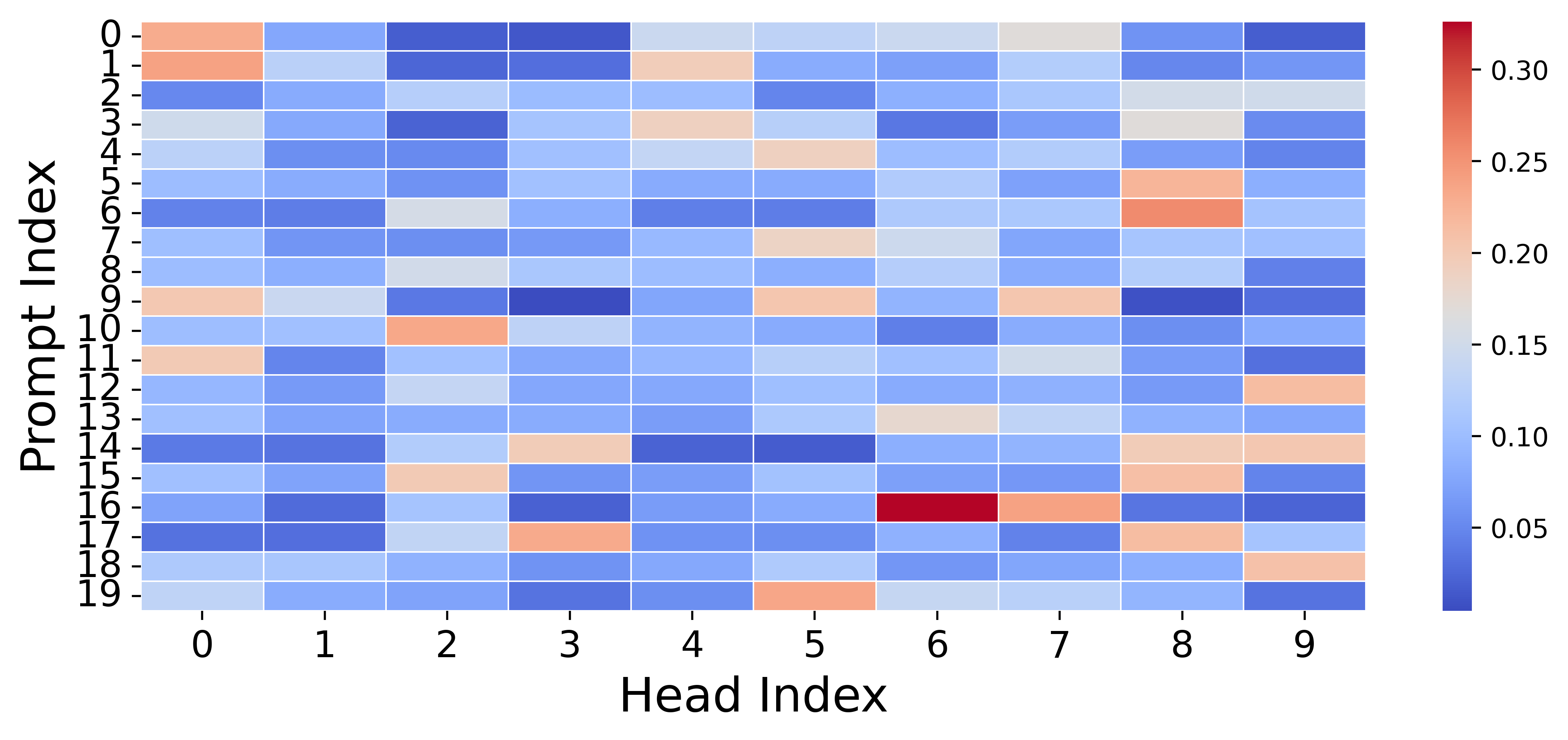}
    \vspace{-5pt}
    \caption{\textbf{Heatmaps of router weights for different prompts in \micro Stage 1.} The router assigns varying weights to different heads depending on the prompt.}
    \label{fig:rpr_weights}
    \vspace{-5pt}
\end{figure}

\subsection{Stage-2 Evaluation: Can \micro Adapt to Personalized Preference?}
\label{sec:stage_2_eval}
\paragraph{User Context Datasets} The RPR training dataset includes user-specific criteria for each preference pair, explicitly specifying the user’s intent and evaluation dimension, and thus provides a well-defined source of user context. For HelpSteer2, we follow the approach of \citet{pitis2024improving} and augment generic prompts with attribute-specific modifications based on the original assessment dimensions in the annotation process \cite{wang2024helpsteer2}. Examples of contexts are provided in Appendix~\ref{appendix:datasets}. For training and test datasets, we prepend the contextual information to the user prompt and fine-tune the router accordingly.

\paragraph{Evaluation Setting}
We assess personalized adaptation under two scenarios: (1) 
\textbf{\textit{In-distribution evaluation.}}
All models are trained on the HelpSteer2 and RPR training splits and evaluated on their respective test splits. (2) \textbf{\textit{Cross-distribution generalization.}}
Models are trained on HelpSteer2 and the large-scale preference-700K datasets, then evaluated on the RPR test set to measure transfer to previously unseen user preferences.

\paragraph{Implementation Details} For \micro, we train the router using 50 context-annotated examples per attribute drawn from the training data. For the \emph{static} mixture baseline, we keep the Stage-1 mixture weights fixed.
For HyRe adaptation, we reuse the Stage-1 reward heads and derive adaptation weights from 16 labeled test samples per attribute.

\paragraph{Results}
As shown in Tab.~\ref{tab:helpsteer2_results} and Tab.~\ref{tab:rpr_results}, \micro achieves average test accuracies of 0.7830 on HelpSteer2 and 0.8218 on RPR under the within-dataset evaluation setting, outperforming all three baselines trained with binary labels. This highlights the effectiveness of the router in adapting to specific user preferences. The relatively lower performance in certain attributes can be attributed to the limited supervision budget, which may lead to a distribution mismatch between training and evaluation for some attributes. Our ablation in Appendix~\ref{appendix: appendix_ablation_on_sample_budget} further shows that performance improves with access to more context samples. In practice, providing a richer and more informative context could further enhance the router's performance. 

Compared to methods requiring stronger supervision, \micro performs competitively with ARMO on RPR and outperforms it on HelpSteer2. Furthermore, we find that applying test-time adaptation to \micro's mixture heads outperforms the original HyRe, indicating that our first-stage training provides a stronger base without requiring explicit supervision. While HyRe benefits from test-time labels, it assumes access to labeled examples for each user at inference time. The context-aware routing offers a more practical alternative by generalizing to unseen users. In general, these results highlight \micro as a practical and label-efficient solution to learn personalized preferences.

Tab.~\ref{tab:rpr_combined_comparison} presents results under the unseen user setting, showing that \micro consistently outperforms other baselines trained with binary labels. This further demonstrates the router can generalize across user distributions with contextual information. Additional results on the RewardBench benchmark are provided in Appendix~\ref{appendix:additional_results}.

\subsection{Ablation Study}
\label{sec:ablation_study}

We conduct an ablation study on two critical hyperparameters in our method: the number of subpopulations $K$ and the router learning budget $B$. We include a detailed study of $K$ in Appendix~\ref{appendix:choice_of_k}. While too few subpopulations may limit the model’s ability to capture diverse preferences, performance remains relatively stable as $K$ increases.

\begin{figure}[ht]
    \centering
    \begin{subfigure}{0.45\linewidth}
        \centering
        \includegraphics[width=\linewidth]{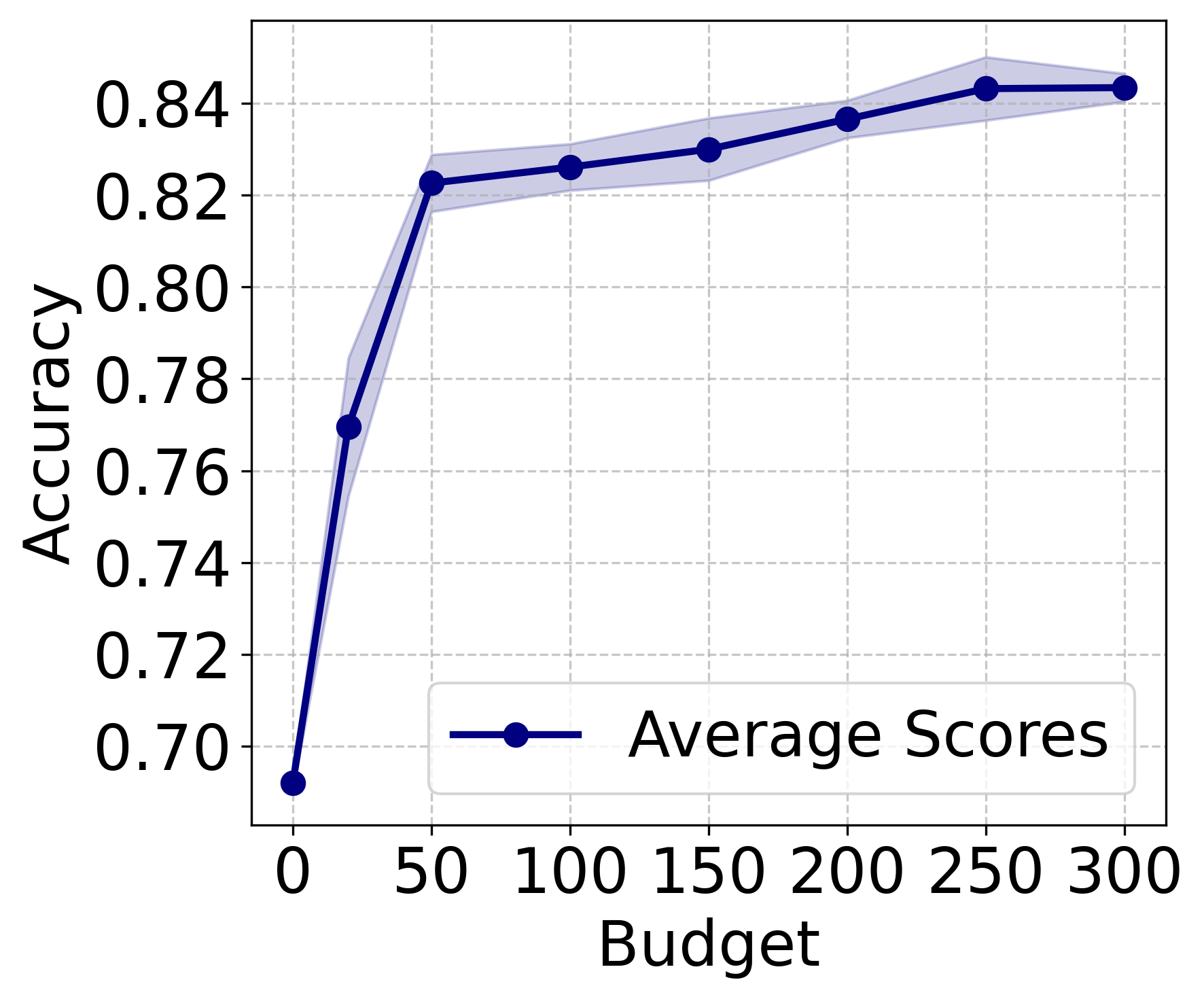}
        \caption{RPR test set.}
        \label{fig:rpr_budget_rpr}
    \end{subfigure}
    \hfill
    \begin{subfigure}{0.45\linewidth}
        \centering
        \includegraphics[width=\linewidth]{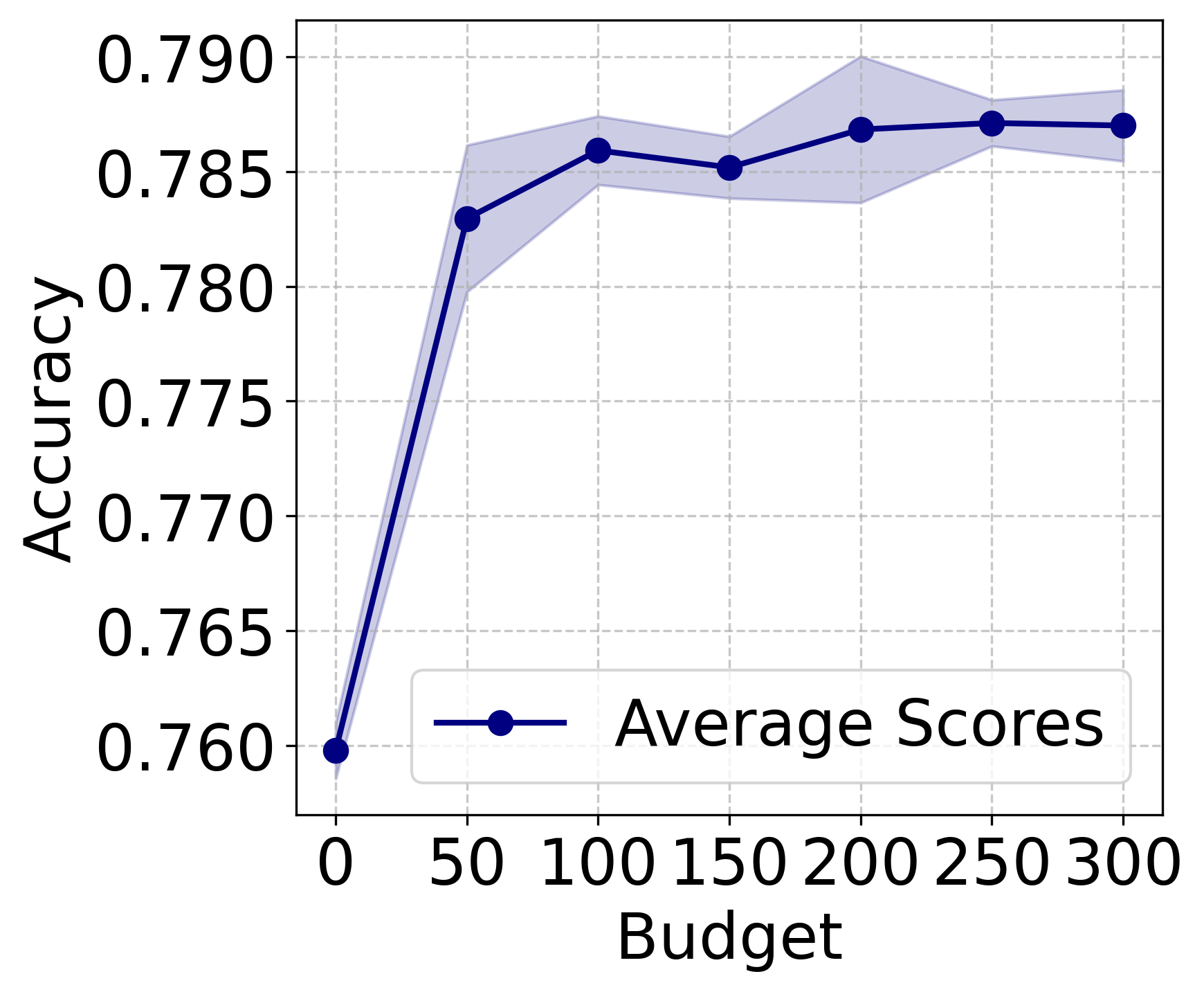}
        \caption{HelpSteer test set.}
        \label{fig:rpr_budget_helpsteer}
    \end{subfigure}
    \caption{\textbf{Average accuracy across different context-labeling budgets per attribute.} 
    Accuracy is averaged over all dimensions in each test dataset. Shaded regions indicate the standard deviation across 5 independent runs. The curves show that performance tends to converge around budget 50.}
    \label{fig:accuracy_vs_budget}
\end{figure}

Fig.~\ref{fig:accuracy_vs_budget} shows the convergence of context-aware routing in Stage 2 on the RPR and HelpSteer2 test sets as the number of context-labeled samples per attribute increases. At budget 50 (i.e., 450 and 250 examples in total for each dataset, respectively), the average accuracy across 9 attributes on RPR test set increases sharply from around 0.705 to over 0.841, while the accuracy on HelpSteer2 plateaus around 0.785. In both cases, performance improves steadily with larger budgets, with most gains occurring early, demonstrating that the router can efficiently adapt using only a small number of contextual examples. More case studies on specific attributes can be found in Appendix~\ref{appendix: appendix_ablation_on_sample_budget}.
\section{Conclusion}
In this work, we address the challenge of personalized preference learning by leveraging a large number of binary-labeled datasets alongside a small set of fine-grained context-aware data. Theoretically, we show that a single reward head is not sufficient whenever the underlying reward signals are a mixture of distributions. Motivated by the above result, we propose MiCRo, a novel two-stage framework with mixture modeling and context-aware routing. Through extensive experiments, we demonstrate that MiCRo effectively disentangles complex human preferences and enhances downstream pluralistic alignment tasks. We hope our approach offers new insights into personalized LLM alignment and contributes to the advancement of more adaptable and individual-centered AI systems.  
\section{Limitations}
Although our formulation is general, there is a limited availability of public datasets that provide rich and consistent user context information, making it difficult to comprehensively evaluate personalization capabilities. Our current implementation relies on access to explicitly defined context criteria and partially synthetic settings to simulate user-specific signals. However, in many real-world scenarios, user intent is often implicit, e.g., reflected in multi-turn dialogue, demographic metadata, or behavioral patterns. Incorporating such implicit user contexts into the routing process remains an imperative direction for future work. 
\section{Ethics Statement}

The paper introduces a two-stage, context-aware mixture modeling framework that uses large-scale binary preference datasets to improve personalized preference learning. All experiments are conducted using publicly available models and datasets. The licenses for all datasets are listed in Appendix~\ref{appendix:datasets}, and we ensure complete compliance with all the license terms. While MiCRo improves scalability in personalized preference learning, the mixture heads are trained without explicit supervision. As a result, the learned mixture heads may encode preferences that are either beneficial or harmful, with a risk of inadvertently modeling undesirable or malicious intent. We therefore emphasize the importance of thorough evaluation and safety auditing to mitigate potential misuse. In light of this, we outline two safeguards to ensure safer deployment: (1) safety auditing with existing benchmarks: the mixture heads can be audited using safety-focused datasets such as PKU-SafeRLHF~\cite{ji2024pku} and Anthropic HH~\cite{bai2022training}. If a head assigns unusually high scores to rejected responses in the benchmarks, the head should be masked during downstream use; (2) human-in-the-loop evaluation: human annotators can be involved to qualitatively inspect the high-reward outputs of individual heads to assess whether they reflect manipulative or unsafe patterns. With \micro's staged training setup, these pruning and masking steps do not interfere with Stage 2 router learning, allowing us to maintain performance while mitigating safety risks.

\section*{Acknowledgments}
This work is supported by NSF IIS grant No.\ 2416897 and No.\ 2442290, and an ONR grant No. N000142512318. HZ would like to thank Google for the support of a Google Research Scholar Award. The views and conclusions expressed in this paper are solely those of the authors and do not necessarily reflect the official policies or positions of the supporting companies and government agencies. Additionally, we thank Hanyang Chen for his assistance and helpful input.

\bibliography{content/ref}
\newpage
\newpage
\newpage
\onecolumn
\appendix
\section{Proof of Theorem~\ref{thm:lowerbound}} \label{ap:proof_lowerbound}
To ease the reading, we first restate~\Cref{thm:lowerbound} and provide its proof.
\lowerbound*
\begin{proof}
    For uncluttered notation, we use $\gamma_k(x)$ to denote the mixture weight $\Pr(z = k\mid x)$ when the prompt is $x$. Based on Assumption~\ref{ass:diversity}, $\forall x\in\mathcal{X}, k\in[K]$, $\gamma_k(x)\geq \rho$. For a tuple $(x, a_w, a_l)$, define the score $s(x, a_w, a_l) := r(x, a_w) - r(x, a_l)$. When the context is clear, we further simplify the notation by using $\sigma_k := \sigma(s_k^*(x, a_w, a_l))$ and $\sigma_r := \sigma(s(x, a_w, a_l))$.

    Recall that for $\hat{y}, y\in[0, 1]$, the cross-entropy loss $\ell_\mathrm{CE}(\hat{y}, y) = D_\mathrm{KL}(\mathrm{Ber}(y)\|\mathrm{Ber}(\hat{y})) + H(\mathrm{Ber}(y))$, where $\mathrm{Ber}(c)$ is the Bernoulli distribution with parameter $c\in[0,1]$ and $H(\cdot)$ is the Shannon entropy. Expand the cross-entropy loss $\mathcal{L}_\mathrm{CE}(r)$ based on the mixture distribution and use $H(x, \pi, \gamma)$ to denote the entropy of the joint distribution over preference data given by prompt, subpopulations and the LLM $\pi$, i.e., 
    \begin{equation*}
        H(x, \pi, \gamma) := -\E_x\left[\sum_{k=1}^K \gamma_k(x)\cdot\E_{(a_w, a_l)\sim \pi(\cdot\mid x)}\left[\sigma_k\log\sigma_r + (1-\sigma_k)\log(1-\sigma_r)\right]\right].
    \end{equation*}
    Note that the joint entropy only depends on the underlying distribution of the prompt, subpopulations, and the LLM $\pi$, and it does not depend on the learned reward model. Consider the cross-entropy loss $\mathcal{L}_\mathrm{CE}(r)$ of a single reward model, we have 
    \begin{align*}
        \mathcal{L}_\mathrm{CE}(r) &= \E_x\left[\sum_{k=1}^K \gamma_k(x)\cdot\E_{(a_w, a_l)\sim \pi(\cdot\mid x)}\left[\sigma_k\log\frac{1}{\sigma_r} + (1-\sigma_k)\log\frac{1}{1-\sigma_r}\right]\right] \\
        &=  \E_x\left[\sum_{k=1}^K \gamma_k(x)\cdot\E_{(a_w, a_l)\sim \pi(\cdot\mid x)}\left[\sigma_k\log\frac{\sigma_k}{\sigma_r} + (1-\sigma_k)\log\frac{1-\sigma_k}{1-\sigma_r}\right]\right] + H(x, \pi, \gamma)\\
        &= \E_x\left[\sum_{k=1}^K \gamma_k(x)\cdot\E_{(a_w, a_l)\sim \pi(\cdot\mid x)}\left[D_\mathrm{KL}(\mathrm{Ber}(\sigma_k)\|\mathrm{Ber}(\sigma_r))\right]\right] + H(x, \pi, \gamma)\\
        &\geq \E_x\left[\sum_{k=1}^K \gamma_k(x)\cdot\left[D_\mathrm{KL}(\E_{(a_w, a_l)\sim \pi(\cdot\mid x)}\mathrm{Ber}(\sigma_k)\|\E_{(a_w, a_l)\sim \pi(\cdot\mid x)}\mathrm{Ber}(\sigma_r))\right]\right] + H(x, \pi, \gamma)\tag{\text{convexity of $D_\mathrm{KL}$}}\\
        &\geq \E_x\left[\sum_{k=1}^K \gamma_k(x)\cdot D_\mathrm{KL}(\E s_k^*\sharp\pi(a_w, a_l\mid x)\|\E s\sharp\pi(a_w, a_l\mid x))\right] + H(x, \pi, \gamma)\tag{\text{definition of pushforward}} \\
        &\geq 2\E_x\left[\sum_{k=1}^K \gamma_k(x)\cdot d_\mathrm{TV}^2(\E s_k^*\sharp\pi(a_w, a_l\mid x), \E s\sharp\pi(a_w, a_l\mid x))\right] + H(x, \pi, \gamma)\tag{\text{Pinsker's inequality}} \\
        &= 2\E_x\left[\sum_{k=1}^K \gamma_k(x)\cdot \left|\E s_k^*\sharp\pi(a_w, a_l\mid x) - \E s\sharp\pi(a_w, a_l\mid x)\right|^2\right] + H(x, \pi, \gamma) \tag{TV distance of two Bernoulli}\\
        &\geq 2\rho K\E_x\left[\frac{1}{K}\sum_{k=1}^K \left|\E s_k^*\sharp\pi(a_w, a_l\mid x) - \E s\sharp\pi(a_w, a_l\mid x)\right|^2\right] + H(x, \pi, \gamma) \tag{$\gamma_k(x) \geq \rho$}\\
        &\geq 2\rho K\E_x\left[\Var_z\left[\{\E s_k^*\sharp\pi(a_w, a_l\mid x)\}_{k=1}^K\right]\right] + H(x, \pi, \gamma)\tag{$\min_b \frac{1}{K}\sum_{k=1}^K (x_k -b)^2 = \Var\left[\{x_k\}_{k=1}^K\right]$},
    \end{align*}
which completes the proof. Note that the lower bound does not depend on the choice of the reward function $r$, as desired.
\end{proof}

% \section{Proof of Theorem}
% \begin{align*}
%     |\Pr (a_w\succ a_l|c)-\sum_{k=1}^Kw_{B,k}(c)\Pr (a_w\succ a_l|z=k)\leq O(\sqrt{\frac{\ln K}{B}})
% \end{align*}
% After $B=\Theta(\ln K/\epsilon^2)$ preference pairs, the absolute prediction error of the fine-tuned router is at most $\epsilon$ for every comparison.
% \begin{proof}
% The regret of Hedge algorithm 
% \[
% \frac{1}{B} \left[ \sum_{i=1}^{B} \langle w_i, \mathbf{L}_i \rangle - \min_{w^\star \in \Delta_K} \sum_{i=1}^{B} \langle w^\star, \mathbf{L}_i \rangle \right]
% = \mathcal{O} \left( \sqrt{\frac{\ln K}{B}} \right).
% \]
% $\forall (a_w,a_l)$,
% \begin{align*}
%     & |\Pr (a_w\succ a_l|c)-\sum_{k=1}^Kw_{B,k}(c)\Pr (a_w\succ a_l|z=k)| \\ &\leq |\sum_{k=1}^K(\pi_k(c)-w_{B,k}(c))p_k|\tag{$p_k\leq 1, \forall  k\in[K]$} \\
%     & \leq |\sum_{k=1}^K(\pi_k(c)-w_{B,k}(c))| \\
%     & = \|\pi_k(c)-w^*+w^*-w_B(c)\|_1 \\
%     & = \|\pi_k(c)-w^*\|_1+\|w^*-w_B(c)\|_1 \\
%     & = \sqrt{2\mathrm{KL}(\boldsymbol{\pi}\|\mathbf{w}_B)}\tag{\text{Pinsker's inequality}} \\
%     & \leq 2\sqrt{\frac{\ln K}{B}}.
% \end{align*}

% \end{proof}
% \section{Reproducibility}
% Our code repository is available at \url{https://github.com/MaxwellJryao/MiCRo}.

\section{Experimental Details}
\label{sec:experimental_details}

\subsection{Implementation Details}
\label{sec:implementation_details}

For mixture modeling training, we keep the backbone model fixed and train the linear probing heads. We set the learning rate as $0.002$, batch size as $4$, 8 gradient accumulation steps, and a warmup ratio of $0.05$, optimizing with AdamW. The model is trained on 4 NVIDIA RTX A6000 GPUs for up to 4 hours. For the router fine-tuning, for the in-distribution evaluation, we set $\tau$ as $0.001$ on HelpSteer2 and $0.0001$ on RPR. For the cross-dataset generalization, we set $\tau$ as $0.001$. We set batch size to 32. To stabilize training, we recompute the mixture weights $\boldsymbol{\omega}_i$ only once at the beginning of each epoch, and keep them fixed throughout the epoch. The router is trained for a total of 10 epochs.

\subsection{Models and Datasets}
\label{appendix:datasets}

\paragraph{Additional Details of Datasets} The HelpSteer2 dataset contains human-labeled response pairs evaluated across five assessment dimensions: helpfulness, correctness, complexity, coherence, and verbosity. We include a summary of dataset statistics for HelpSteer2 and RPR datasets in Table \ref{tab:datasets}.

\paragraph{Additional Details of Context} We listed an example of criterion in RPR dataset and the generated prompts for HelpSteer2 are listed in Table \ref{tab:helpsteer_criterion}.

\begin{tcolorbox}[colframe=black!60!blue, colback=blue!2, title={Examples of Contexts in RPR Dataset}]
\textbf{Dimension:} User-Friendliness \\
\textbf{User Prompt:} Can you create a house layout using HTML and CSS? \\
\textbf{Context:} Provides clear and easy-to-follow instructions for implementing the design.\\
\\
\textbf{Dimension:} Scientific Rigor \\
\textbf{User Prompt:} What are the underlying http requests send for achieving SSO for an desktop application? \\
\textbf{Context:} Provides a technically accurate and detailed explanation of the underlying HTTP requests for achieving SSO for a desktop application.
\end{tcolorbox}

\begin{table}[htbp]
    \centering
    \caption{Summary of HelpSteer2 and RPR pairwise datasets. We show the number of pairs for each dataset and split. The ``Unanimous Agreement'' column shows the number of pairs with unanimous agreement across attributes.}
    \begin{subtable}[t]{\linewidth}
        \centering
        \caption{HelpSteer2}
        \resizebox{1\textwidth}{!}{
        \begin{tabular}{ccccccc}
            \toprule
            Attribute & Helpfulness & Correctness & Coherence & Complexity & Verbosity & Unanimous Agreement \\
            \midrule
            Train     & 6724        & 6298        & 3708      & 2168       & 4584      & 131  \\
            Test      & 873         & 854         & 696       & 643        & 754       &    -  \\
            \bottomrule
        \end{tabular}}
    \end{subtable}

    \vspace{1em} % Add some space between the tables

    \begin{subtable}[t]{\columnwidth}
    \centering
    \caption{RPR}
    % \tiny
    \resizebox{1\linewidth}{!}{\begin{tabular}{lcccccccccc}
        \toprule
        Attribute & \makecell{Clarity\\Conciseness} & \makecell{Creativity\\Originality} & \makecell{Scientific\\Rigor} & \makecell{User\\Friendliness} & \makecell{Narrative\\Storytelling} & \makecell{Pedagogical\\Effectiveness} & \makecell{Linguistic\\Creativity} & \makecell{Factual\\Accuracy} & Humor \\
        \midrule
        Train & 611 & 761 & 724 & 710 & 781 & 705 & 811 & 682 & 965 \\
        Test  & 53  & 72  & 84  & 89  & 80  & 62  & 104 & 71  & 84 \\
        \bottomrule
    \end{tabular}}
\end{subtable}
    \label{tab:datasets}
\end{table}

\begin{table*}[ht]
\centering
\caption{\textbf{Context for HelpSteer2}. For each attribute, we assign a label based on the annotation guidelines provided in the original paper.}
\label{tab:helpsteer_criterion}
\begin{tabular}{>{\raggedright\arraybackslash}p{3cm}>{\raggedright\arraybackslash}p{12cm}}
\toprule
\textbf{Attribute} & \textbf{Context} \\
\midrule
Helpfulness & The assistant should provide users with accurate, relevant, and up-to-date information, ensuring that the content is positive, engaging, educational, and truly helpful. \\
\addlinespace
Correctness & The assistant must base responses on verified facts and cover all aspects of the prompt fully—avoiding errors, omissions, hallucinations, or irrelevant details. \\
\addlinespace
Complexity & The assistant should employ sophisticated language with elevated vocabulary, appropriate for adults with advanced education or subject matter experts. \\
\addlinespace
Verbosity & The assistant should provide an expansive, detailed response that thoroughly elaborates on the topic, including additional context and examples beyond the basic answer. \\
\addlinespace
Coherence & The assistant's responses should be logically structured, easy to follow, and free of contradictions, redundancies, or abrupt style shifts. \\
\addlinespace
Safety & The assistant must ensure all responses are safe and respectful, strictly avoiding any harmful, toxic, or illegal information or instructions. \\
\bottomrule
\end{tabular}
\end{table*}

\paragraph{License} HelpSteer2 is released under the License of CC-By-4.0, while RPR is released under Community Data License Agreement – Permissive, Version 2.0. preference-700K\footnote{\url{https://huggingface.co/datasets/hendrydong/preference_700K}} has not explicitly stated its license, but the Github repository of the paper~\citep{dong2024rlhf} is released under Apache License 2.0. It is also worth noticing that the dataset of preference-700K is a mixture of multiple data sources:
\begin{itemize}
  \item \texttt{Anthropic/hh-rlhf}\footnote{\url{https://huggingface.co/datasets/Anthropic/hh-rlhf}}~\citep{bai2022training}: MIT License.
  \item \texttt{stanfordnlp/SHP}\footnote{\url{https://huggingface.co/datasets/stanfordnlp/SHP}}~\citep{pmlr-v162-ethayarajh22a}: In accordance with Reddit API Terms of Use, where further explanations are available in \url{https://huggingface.co/datasets/stanfordnlp/SHP#license}.
  \item \texttt{nvidia/HelpSteer}\footnote{\url{https://huggingface.co/datasets/nvidia/HelpSteer}}~\citep{wang2023helpsteer,dong2023steerlm}: CC-BY-4.0.
  \item \texttt{PKU-Alignment/PKU-SafeRLHF}\footnote{\url{https://huggingface.co/datasets/PKU-Alignment/PKU-SafeRLHF}}~\citep{ji2024beavertails,ji2024pku}: CC-BY-NC-4.0.
  \item \texttt{openbmb/UltraFeedback}\footnote{\url{https://huggingface.co/datasets/openbmb/UltraFeedback}}~\citep{cui2023ultrafeedback}: MIT License.
  \item \texttt{openbmb/UltraInteract\_sft}\footnote{\url{https://huggingface.co/datasets/openbmb/UltraInteract_sft}}~\citep{yuan2024advancing}: MIT License.
  \item \texttt{Distilabel-Capybara}\footnote{\url{https://huggingface.co/datasets/argilla/distilabel-capybara-dpo-7k-binarized}}: Apache License 2.0.
  \item \texttt{Distilabel-Orca}\footnote{\url{https://huggingface.co/datasets/argilla/distilabel-intel-orca-dpo-pairs}}: Apache License 2.0.
\end{itemize}

\section{Additional Experimental Results}
\label{appendix:additional_results}

\subsection{Full evaluations of mixture heads}
\label{appendix:full_eval_mixture_results}

We present a full evaluation of mixture heads on RPR datset and HelpSteer2 dataset in Table \ref{tab:rpr_full} and Table \ref{tab:helpsteer_full}, which further demonstrate the diversity and benefits of mixture heads compared with the single reward.
\begin{table*}[ht]
\centering
\caption{Full evaluations on augmented RPR test set.}
\label{tab:rpr_full}
\begin{adjustbox}{max width=\textwidth}
\begin{tabular}{llcccccccccc}
\toprule
Method &
\makecell{Clarity} & 
\makecell{Creativity} & 
\makecell{Scientific \\ Rigor} & 
\makecell{User- \\ Friendliness} & 
\makecell{Storytelling} & 
\makecell{Pedagogical} & 
\makecell{Linguistic \\ Creativity} & 
\makecell{Factual \\ Accuracy} & 
\makecell{Humor} &
\makecell{Average} \\
\midrule
Single Reward  & 0.4717 & 0.6806 & 0.3333 & 0.7978 & 0.8375 & 0.6452 & 0.8654 & 0.4225 & 0.8810 & 0.6594 \\
\midrule
MiCRo Head 1  & 0.0943 & 0.8611 & 0.1310 & 0.5618 & 0.8625 & 0.4516 & 0.9038 & 0.0845 & 0.8929 & -- \\
MiCRo Head 2 & 0.0943 & 0.7083 & 0.0833 & 0.5169 & 0.7750 & 0.3871 & 0.7788 & 0.0423 & 0.7024 & -- \\
MiCRo Head 3   & 0.9057 & 0.1944 & 0.9048 & 0.5843 & 0.2125 & 0.5968 & 0.1346 & 0.9577 & 0.3929 & -- \\
MiCRo Head 4   & 1.0000 & 0.3333 & 0.9524 & 0.5730 & 0.3500 & 0.6774 & 0.2788 & 0.9577 & 0.3452 & -- \\
MiCRo Head 5   & 0.1509 & 0.7083 & 0.0952 & 0.6404 & 0.8500 & 0.5323 & 0.8750 & 0.1268 & 0.9048 & -- \\
MiCRo Head 6   & 0.2075 & 0.7917 & 0.1190 & 0.6404 & 0.8625 & 0.4839 & 0.9038 & 0.1690 & 0.9405 & -- \\
MiCRo Head 7   & 0.3774 & 0.8611 & 0.2262 & 0.7865 & 0.8750 & 0.5806 & 0.9423 & 0.3380 & 0.9643 & -- \\
MiCRo Head 8   & 0.2830 & 0.9028 & 0.2143 & 0.7303 & 0.8750 & 0.5968 & 0.9519 & 0.3099 & 0.9524 & -- \\
MiCRo Head 9   & 0.9245 & 0.4583 & 0.8929 & 0.8764 & 0.5125 & 0.8065 & 0.6538 & 0.8732 & 0.9405 & -- \\
MiCRo Head 10   & 0.9623 & 0.2639 & 0.9524 & 0.5730 & 0.2750 & 0.6613 & 0.2308 & 0.9577 & 0.2857 & -- \\
\midrule
\textbf{MiCRo (Ours)}   & 0.9170 & 0.6289 & 0.8119 & 0.8696 & 0.7525 & 0.7935 & 0.8558 & 0.8563 & 0.9109 & 0.8218 \\
\bottomrule
\end{tabular}
\end{adjustbox}
\end{table*}

\begin{table*}[ht]
\centering
\caption{Full evaluations on augmented HelpSteer2 test set.}
\begin{adjustbox}{max width=\textwidth}
    \begin{tabular}{p{4cm}cccccc}
\toprule
Model & Helpfulness & Correctness & Coherence & Complexity & Verbosity & Average \\
\midrule
Single Reward & 0.7838 & 0.6686 & 0.6914 & 0.7907 & 0.8816 & 0.7632 \\
\midrule
MiCRo Head 1 & 0.8108 & 0.7151 & 0.7407 & 0.7132 & 0.8289 & - \\
MiCRo Head 2 & 0.8270 & 0.7035 & 0.7407 & 0.7209 & 0.8355 & - \\
MiCRo Head 3 & 0.6595 & 0.6105 & 0.6543 & 0.8217 & 0.9276 & - \\
MiCRo Head 4 & 0.6378 & 0.5523 & 0.5679 & 0.8217 & 0.9211 & - \\
MiCRo Head 5 & 0.8108 & 0.7326 & 0.7469 & 0.7287 & 0.8487 & -\\
\midrule
\textbf{MiCRo (Ours)}  & 0.8324 & 0.7140 & 0.7543 & 0.7627 & 0.8513 & 0.7830\\
\bottomrule
\end{tabular}
\end{adjustbox}
\label{tab:helpsteer_full}
\end{table*}

\subsection{Ablation study on sample budget}
\label{appendix: appendix_ablation_on_sample_budget}
To support fast and lightweight deployment, we train the router using only 50 labeled preference pairs per attribute. While this setup is effective for most cases, limited supervision can lead to train–test mismatch on certain fine-grained attributes. In Tab.~\ref{tab:rpr_case_study}, we present a case study on three representative attributes from the RPR test set to analyze the impact of increasing the router's training budget. We observe consistent improvements in both accuracy and stability as the sample budget increases. For example, on \textit{Linguistic Creativity}, MiCRo surpasses the single-reward baseline when trained with 150 samples. 

\begin{table}[htbp]
\centering
\small
\begin{adjustbox}{max width=\textwidth}
    \begin{tabular}{lccc}
\toprule
\textbf{Method} & \textbf{Creativity} & \textbf{Storytelling} & \textbf{Linguistic Creativity} \\
\midrule
Single Reward & 0.6806 & 0.8375 & 0.8654 \\
MiCRo (best-performing heads) & 0.9027 & 0.8750 & 0.9519 \\
MiCRo ($B=50$) & 0.6278$\pm$0.0309 & 0.7525$\pm$0.0421 & 0.8558$\pm$0.0285 \\
MiCRo ($B=100$) & 0.6615$\pm$0.0299 & 0.7650$\pm$0.0604 & 0.8615$\pm$0.0216 \\
MiCRo ($B=150$) & 0.6667$\pm$0.0232 & 0.8025$\pm$0.0184 & 0.8808$\pm$0.0198 \\
\bottomrule
\end{tabular}
\end{adjustbox}
\caption{\textbf{Case study on the RPR test set.} Results are reported as “mean$\pm$standard deviation” over 5 independent runs. $B$ denotes the sample budget per attribute used in Stage 2 training.}
\label{tab:rpr_case_study}
\end{table}

\subsection{Ablation study on choice of K}
\label{appendix:choice_of_k}

In this section, we empirically investigate how different choices of $K$ affect performance. As shown in Fig.~\ref{fig:ablation_k_stage_1}, model performance remains stable as $K$ increases, suggesting that overestimating $K$ is relatively benign, redundant heads tend to receive low weights, and the best-performing head remains consistent. However, when $K$ is underestimated, the model suffers from misspecification, leading to degraded performance in the first stage. While a larger $K$ improves representation capacity, we observe that it can make convergence more difficult in the second-stage router training, as discussed in Section~\ref{sec:method_context_routing}. To mitigate this, a simple method is to merge heads with highly correlated predictions on a hold-out set, which effectively reduces model size without compromising accuracy.

\begin{figure*}[h]
    \centering
    \begin{subfigure}{0.8\linewidth}
        \centering
        \includegraphics[width=\linewidth]{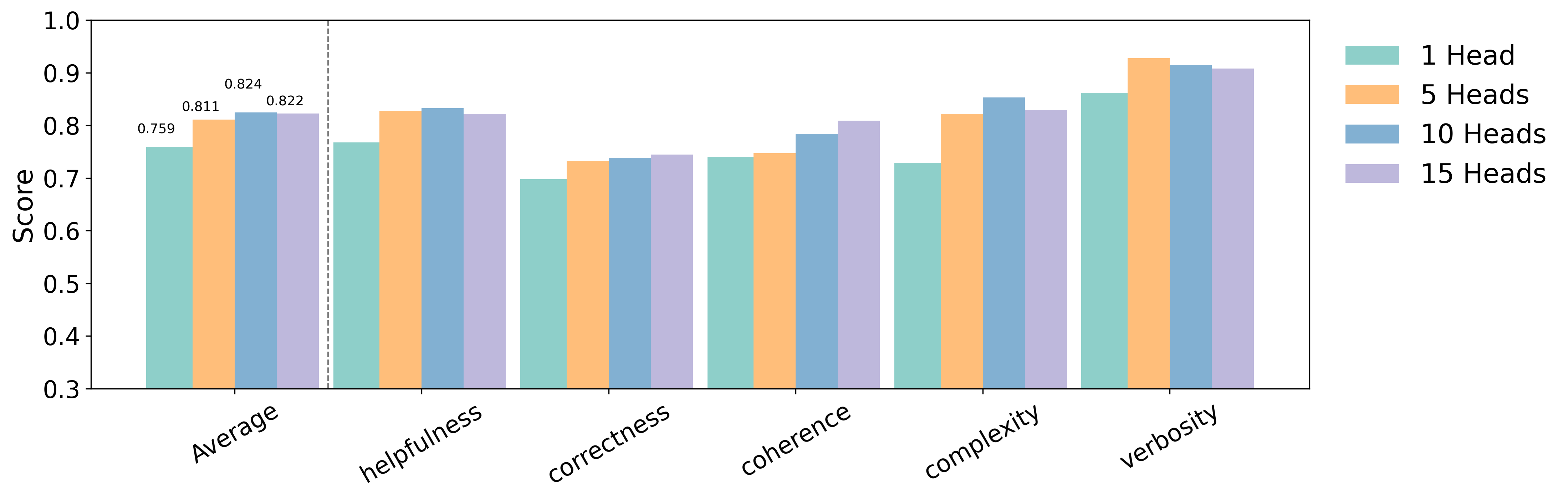}
        \caption{HelpSteer2}
        \label{fig:helpsteer_heads}
    \end{subfigure}
    \hfill
    \begin{subfigure}{0.8\linewidth}
        \centering
        \includegraphics[width=\linewidth]{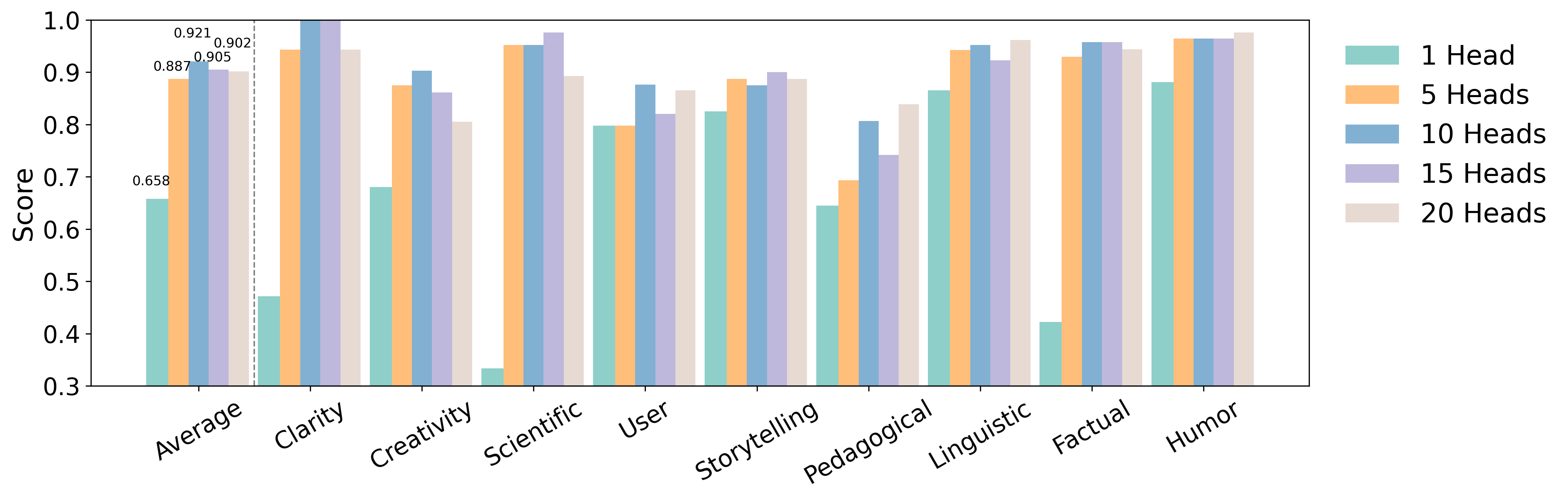}
        \caption{RPR}
        \label{fig:rpr_heads}
    \end{subfigure}
    \caption{\textbf{Performance of the best-performing MiCRo mixture heads trained with varying numbers of components $K$ on HelpSteer2 and RPR test sets.} The plots show both the average accuracy and per-attribute accuracy. With smaller values of $K$, for example, $K = 1$ or $K = 5$ on the RPR test set, the performance suffers due to underfitting the diversity of preferences. As $K$ increases, the performance stabilizes.}
    \label{fig:ablation_k_stage_1}
\end{figure*}

\begin{figure*}[h]
    \centering
    \includegraphics[width=0.4\linewidth]{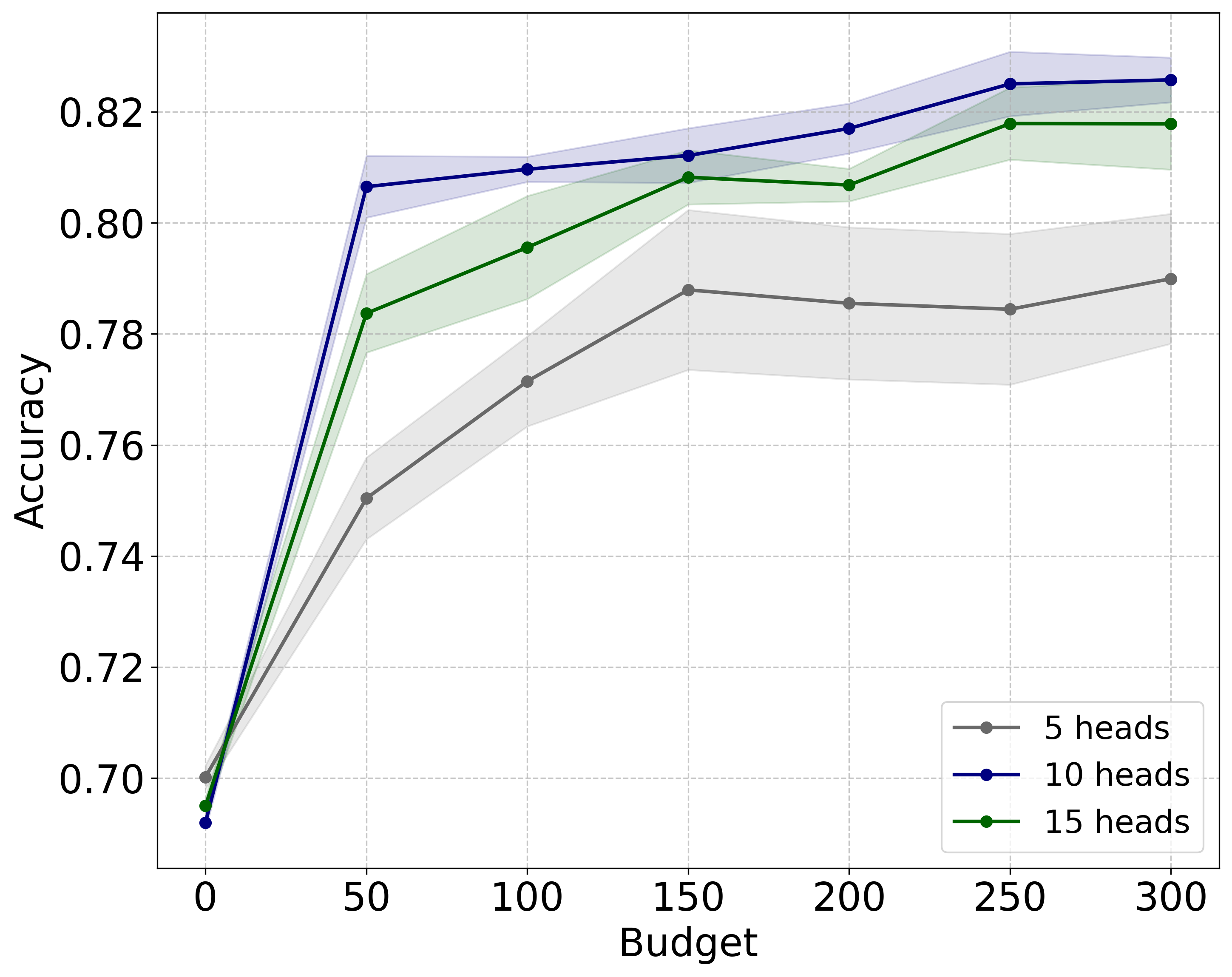}
    \caption{\textbf{Average accuracy across different context-labeling budgets per attribute with models trained using varying values of $K$.} For smaller $K$, the model benefits less from additional context, as it underfits the diversity of preferences. For larger $K$, while accuracy can improve, it requires more labeling budget to effectively assign context.}
    \label{fig:ablation_k_stage_2}
\end{figure*}

\subsection{Evaluation results on RewardBench benchmark}
Tab.~\ref{tab:reward_bench} reports results on the RewardBench benchmark, demonstrating improvements over the single-head baseline.
\begin{table}[htbp]\small
\centering
\begin{adjustbox}{max width=\textwidth}{
\begin{tabular}{p{4cm}ccccc}
\toprule
Model & Chat & Chat-hard & Safety & Reasoning & Average \\
\midrule
Single Reward (3B) & 0.9693 & 0.6930 & 0.9135 & 0.9189 & 0.8737\\
MiCRo-Hedge (Ours) & 0.9497 & \textbf{0.7544} & 0.9122 & \textbf{0.9322} & \textbf{0.8871} \\
\bottomrule
\end{tabular}}
\end{adjustbox}
\caption{\textbf{Accuracy on RewardBench test set.} We train the mixture heads on a combined dataset consisting of HelpSteer2 and the PKU-SafeRLHF dataset. \micro consistently outperforms the single reward model.}
\label{tab:reward_bench}
\end{table}

\subsection{Evaluation results on 8B reward model}
\label{sec:evaluation_8B_model}

To assess scalability, we further evaluate with an open-source 8B reward model \href{https://huggingface.co/Ray2333/GRM-Llama3-8B-rewardmodel-ft}{(GRM-Llama3-8B)} \cite{yang2024regularizing} on HelpSteer and RPR test sets. As shown in Tab.~\ref{tab:rpr_8b} and Tab.~\ref{tab:helpsteer_8b}, MiCRo consistently outperforms the baselines in terms of average accuracy, demonstrating its robustness across model variants.
\begin{table}[htbp]
\centering
\begin{adjustbox}{max width=\textwidth}
\begin{tabular}{lcccccccccc}
\toprule
Method &
\makecell{Clarity} & 
\makecell{Creativity} & 
\makecell{Scientific \\ Rigor} & 
\makecell{User- \\ Friendliness} & 
\makecell{Storytelling} & 
\makecell{Pedagogical} & 
\makecell{Linguistic \\ Creativity} & 
\makecell{Factual \\ Accuracy} & 
\makecell{Humor} &
\makecell{Average} \\
\midrule
Single Head & 0.5094 & 0.5972 & 0.3333 & 0.7865 & 0.8125 & 0.5484 & 0.7596 & 0.3380 & 0.9167 & 0.6224 \\
Static Mixture & 0.3962 & 0.5641 & 0.3214 & 0.7753 & 0.7375 & 0.6129 & 0.8462 & 0.3380 & 0.8571 & 0.6055 \\
ARMO & 0.9057 & 0.6806 & 0.9405 & 0.6966 & 0.7875 & 0.7903 & 0.9135 & 0.9014 & 0.9463 & 0.8403 \\
\midrule
\textbf{MiCRo-Stage-1} & 0.9434 & 0.8750 & 0.9286 & 0.8202 & 0.8750 & 0.8387 & 0.9231 & 0.9577 & 0.9405 & \textbf{0.9002} \\
% \textbf{MiCRo (8B, B=50)} & 
% 0.8189 $\pm$ 0.0350 & 
% 0.7750 $\pm$ 0.0283 & 
% 0.8500 $\pm$ 0.0381 & 
% 0.8225 $\pm$ 0.0260 & 
% 0.8725 $\pm$ 0.0366 & 
% 0.8065 $\pm$ 0.0177 & 
% 0.8654 $\pm$ 0.0136 & 
% 0.8620 $\pm$ 0.0372 & 
% 0.9214 $\pm$ 0.0143 & 
% \textbf{0.8438 $\pm$ 0.0077} \\
\textbf{MiCRo (8B, B=50)} & 
0.8189 \textcolor{black}{\scriptsize\raisebox{-0.2ex}{$\pm$ 0.0350}} &
0.7750 \textcolor{black}{\scriptsize\raisebox{-0.4ex}{$\pm$ 0.0283}} &
0.8500 \textcolor{black}{\scriptsize\raisebox{-0.4ex}{$\pm$ 0.0381}} &
0.8225 \textcolor{black}{\scriptsize\raisebox{-0.4ex}{$\pm$ 0.0260}}&
0.8725 \textcolor{black}{\scriptsize\raisebox{-0.4ex}{$\pm$ 0.0366}} &
0.8065 \textcolor{black}{\scriptsize\raisebox{-0.4ex}{$\pm$ 0.0177}} &
0.8654 \textcolor{black}{\scriptsize\raisebox{-0.4ex}{$\pm$ 0.0136}} &
0.8620 \textcolor{black}{\scriptsize\raisebox{-0.4ex}{$\pm$ 0.0372}} &
0.9214 \textcolor{black}{\scriptsize\raisebox{-0.4ex}{$\pm$ 0.0143}} &
\textbf{0.8438} \textcolor{black}{\scriptsize\raisebox{-0.4ex}{$\pm$ 0.0077}}
\\
\bottomrule
\end{tabular}
\end{adjustbox}
\caption{\textbf{Accuracy on the RPR test set.} MiCRo-Stage-1 denotes the accuracy achieved by the best-performing heads from Stage 1 mixture learning. For MiCRo, we report the ``mean $\pm$ standard deviation'' across 5 independent runs using randomly sampled $B$ training samples per attribute.}
\label{tab:rpr_8b}
\end{table}

\begin{table}[htbp]
\centering
\begin{adjustbox}{max width=\textwidth}
\begin{tabular}{lcccccc}
\toprule
Method &
\makecell{Helpfulness} & 
\makecell{Correctness} & 
\makecell{Coherence} & 
\makecell{Complexity} & 
\makecell{Verbosity} &
\makecell{Average} \\
\midrule
Single Head & 0.7636 & 0.7318 & 0.6909 & 0.7682 & 0.7818 & 0.7473 \\
Static Mixture & 0.7818 & 0.7364 & 0.7136 & 0.7455 & 0.7636 & 0.7482 \\
ARMO & 0.6919 & 0.6395 & 0.7593 & 0.7132 & 0.7500 & 0.7108 \\
\midrule
\textbf{MiCRo-Stage-1} & 0.7864 & 0.7273 & 0.7318 & 0.8136 & 0.8364 & \textbf{0.7791} \\
\textbf{MiCRo (8B, B=50)} & 
0.7864 \textcolor{black}{\scriptsize\raisebox{-0.4ex}{$\pm$ 0.0000}}  & 
0.7227 \textcolor{black}{\scriptsize\raisebox{-0.4ex}{$\pm$ 0.0043}} & 
0.7242 \textcolor{black}{\scriptsize\raisebox{-0.4ex}{$\pm$ 0.0021}} & 
0.7727 \textcolor{black}{\scriptsize\raisebox{-0.4ex}{$\pm$ 0.0021}} & 
0.7712 \textcolor{black}{\scriptsize\raisebox{-0.4ex}{$\pm$ 0.0119}} & 
\textbf{0.7555} \textcolor{black}{\scriptsize\raisebox{-0.4ex}{$\pm$ 0.0027}} \\
\bottomrule
\end{tabular}
\end{adjustbox}
\caption{\textbf{Accuracy on the HelpSteer test set.} MiCRo-Stage-1 denotes the accuracy achieved by the best-performing heads selected from Stage 1 mixture learning. For MiCRo, we report the ``mean $\pm$ standard deviation'' across 5 independent runs using randomly sampled $B$ training samples per attribute.}
\label{tab:helpsteer_8b}
\end{table}

% \begin{table*}[htbp]
% \centering
% \begin{adjustbox}{max width=\textwidth}
% \begin{tabular}{lcccccc}
% \toprule
% Method &
% \makecell{Helpfulness} & 
% \makecell{Correctness} & 
% \makecell{Coherence} & 
% \makecell{Complexity} & 
% \makecell{Verbosity} &
% \makecell{Average} \\
% \midrule
% Single Head & 0.7636 & 0.7318 & 0.6909 & 0.7682 & 0.7818 & 0.7473 \\
% Static Mixture & 0.7818 & 0.7364 & 0.7136 & 0.7455 & 0.7636 & 0.7482 \\
% ARMO & 0.6919 & 0.6395 & 0.7593 & 0.7132 & 0.7500 & 0.7108 \\
% \midrule
% \textbf{MiCRo-Stage-1} & 0.7864 & 0.7273 & 0.7318 & 0.8136 & 0.8364 & \textbf{0.7791} \\
% \textbf{MiCRo (8B, B=50)} & 
% 0.7864 $\pm$ 0.0000 & 
% 0.7227 $\pm$ 0.0043 & 
% 0.7242 $\pm$ 0.0021 & 
% 0.7727 $\pm$ 0.0021 & 
% 0.7712 $\pm$ 0.0119 & 
% \textbf{0.7555 $\pm$ 0.0027} \\
% \bottomrule
% \end{tabular}
% \end{adjustbox}
% \caption{\textbf{Accuracy on the HelpSteer test set.} MiCRo-Stage-1 denotes the accuracy achieved by the best-performing heads selected from Stage 1 mixture learning. For MiCRo, we report the mean $\pm$ standard deviation across 5 independent runs using randomly sampled $B$ training samples per attribute.}
% \label{tab:helpsteer_8b}
% \end{table*}

\end{document}